\declaretheorem[name=Theorem,refname={Theorem,Theorems},Refname={Theorem,Theorems}]{theorem}
\declaretheorem[name=Lemma,refname={Lemma,Lemmas},Refname={Lemma,Lemmas},sibling=theorem]{lemma}
\declaretheorem[name=Assumption,refname={Assumption,Assumptions},Refname={Assumption,Assumptions}]{assumption}
\title{Optimal Design for Human Preference Elicitation}
\author{
  Subhojyoti Mukherjee \\
  University of Wisconsin-Madison\thanks{The work was done at AWS AI Labs.} \\
  \texttt{smukherjee27@wisc.edu}
  \And
  Anusha Lalitha \\
  AWS AI Labs
  \And 
  Kousha Kalantari \\
  AWS AI Labs
  \AND
  Aniket Deshmukh \\
  AWS AI Labs
  \And
  Ge Liu \\
  UIUC\footnotemark[1] \\
  \And
  Yifei Ma \\
  AWS AI Labs
  \And
  Branislav Kveton \\
  Adobe Research\footnotemark[1]
}
\begin{document}

\maketitle

\begin{abstract}
Learning of preference models from human feedback has been central to recent advances in artificial intelligence. Motivated by the cost of obtaining high-quality human annotations, we study efficient human preference elicitation for learning preference models. The key idea in our work is to generalize optimal designs, an approach to computing optimal information-gathering policies, to lists of items that represent potential questions with answers. The policy is a distribution over the lists and we elicit preferences from them proportionally to their probabilities. To show the generality of our ideas, we study both absolute and ranking feedback models on items in the list. We design efficient algorithms for both and analyze them. Finally, we demonstrate that our algorithms are practical by evaluating them on existing question-answering problems.
\end{abstract}

\section{Introduction}
\label{sec:introduction}

\emph{Reinforcement learning from human feedback (RLHF)} has been effective in aligning and fine-tuning \emph{large language models (LLMs)} \citep{rafailov2023direct,kang2023reward,casper2023open,shen2023large,kaufmann2024survey}. The main difference from classic \emph{reinforcement learning (RL)} \citep{sutton2018reinforcement} is that the agent learns from human feedback, which is expressed as preferences for different potential choices \citep{akrour2012april,lepird2015bayesian,sadigh2017active,biyik2020active,wu2023making}. The human feedback allows LLMs to be adapted beyond the distribution of data that was used for their pre-training and generate answers that are more preferred by humans \citep{casper2023open}. The feedback can be incorporated by learning a preference model. When the human decides between two choices, the \emph{Bradley-Terry-Luce (BTL)} model \citep{bradley52rank} can be used. For multiple choices, the \emph{Plackett-Luce (PL)} model \citep{plackett75analysis,luce05individual} can be adopted. A good preference model should correctly rank answers to many potential questions. Therefore, learning of a good preference model can be viewed as learning to rank, and we adopt this view in this work. Learning to rank has been studied extensively in both offline \citep{burges10ranknet} and online \citep{radlinski08learning,kveton15cascading,szorenyi2015online,sui2018advancements,lattimore18toprank} settings.

To effectively learn preference models, we study efficient methods for human preference elicitation. We formalize this problem as follows. We have a set of $L$ \emph{lists} representing \emph{questions}, each with $K$ \emph{items} representing \emph{answers}. The objective of the agent is to learn to rank all items in all lists. The agent can query humans for feedback. Each query is a question with $K$ answers represented as a list. The human provides feedback on it. We study two feedback models: absolute and ranking. In the absolute feedback model, a human provides noisy feedback for each item in the list. This setting is motivated by how annotators assign relevance judgments in search \citep{hofmann13fidelity,ms-marco}. The ranking feedback is motivated by learning reward models in RLHF \citep{rafailov2023direct,kang2023reward,casper2023open,shen2023large,kaufmann2024survey}. In this model, a human ranks all items in the list according to their preferences. While $K = 2$ is arguably the most common case, we study $K \geq 2$ for the sake of generality and allowing a higher-capacity communication channel with the human \citep{zhu23principled}. The agent has a budget for the number of queries. To learn efficiently within the budget, it needs to elicit preferences from the most informative lists, which allows it to learn to rank all other lists. Our main contribution is an efficient algorithm for computing the distribution of the most informative lists.

Our work touches on many topics. Learning of reward models from human feedback is at the center of RLHF \citep{ouyang2022training} and its recent popularity has led to major theory developments, including analyses of regret minimization in RLHF \citep{chen2022human,wang2023rlhf,wu2023making,xiong2023gibbs,opoku2023randomized,saha2023dueling}. These works propose and analyze adaptive algorithms that interact with the environment to learn highly-rewarding policies. Such policies are usually hard to deploy in practice because they may harm user experience due to over-exploration \citep{dudik14doubly,swaminathan15counterfactual}. Therefore, \citet{zhu23principled} studied RLHF from ranking feedback in the offline setting with a fixed dataset. We study how to collect an \emph{informative dataset for offline learning to rank} with both absolute and ranking feedback. We approach this problem as an optimal design, a methodology for computing optimal information-gathering policies \citep{pukelsheim06optimal,fedorov2013theory}. The policies are non-adaptive and thus can be precomputed, which is one of their advantages. The main technical contribution of this work is a matrix generalization of the Kiefer-Wolfowitz theorem \citep{kiefer60equivalence}, which allows us to formulate optimal designs for ranked lists and solve them efficiently. Optimal designs have become a standard tool in exploration \citep{lattimore19bandit,katz2020empirical,katz2021improved,mukherjee2022chernoff,jamieson2022interactive} and adaptive algorithms can be obtained by combining them with elimination. Therefore, optimal designs are a natural stepping stone to other solutions.

We make the following contributions:
\begin{enumerate}
  \item We develop a novel approach for human preference elicitation. The key idea is to generalize the Kiefer-Wolfowitz theorem \citep{kiefer60equivalence} to matrices (\cref{sec:optimal design}), which then allows us to compute information-gathering policies for ranked lists.
  \item We propose an algorithm that uses an optimal design to collect absolute human feedback (\cref{sec:dope absolute}), where a human provides noisy feedback for each item in the queried list. A least-squares estimator is then used to learn a preference model. The resulting algorithm is both computationally and statistically efficient. We bound its prediction error (\cref{sec:prediction error absolute}) and ranking loss (\cref{sec:ranking loss absolute}), and show that both decrease with the sample size.
  \item We propose an algorithm that uses an optimal design to collect ranking human feedback (\cref{sec:dope ranking}), where a human ranks all items in the list according to their preferences. An estimator of \citet{zhu23principled} is then used to learn a preference model. Our approach is both computationally and statistically efficient, and we bound its prediction error (\cref{sec:prediction error ranking}) and ranking loss (\cref{sec:ranking loss ranking}). These results mimic the absolute feedback setting and show the generality of our framework.
  \item We compare our algorithms to multiple baselines in several experiments. We observe that the algorithms achieve a lower ranking loss than the baselines.
\end{enumerate}

\newcommand{\prob}[1]{\mathbb{P} \left(#1\right)}
\newcommand{\condprob}[2]{\mathbb{P} \left(#1 \,\middle|\, #2\right)}
\newcommand{\probrv}[2]{\mathbb{P}_{#1} \left(#2\right)}
\newcommand{\probt}[1]{\mathbb{P}_t \left(#1\right)}

\newcommand{\I}[1]{\mathds{1} \! \left\{#1\right\}}
\newcommand{\support}[1]{\mathrm{supp} \left(#1\right)}

\section{Setting}
\label{sec:setting}

\textbf{Notation:} Let $[K] = \{1, \dots, K\}$. Let $\Delta^L$ be the probability simplex over $[L]$. For any distribution $\pi \in \Delta^L$, we get $\sum_{i = 1}^L \pi(i) = 1$. Let $\Pi_2(K) = \{(j, k) \in [K]^2: j < k\}$ be the set of all pairs over $[K]$ where the first entry is lower than the second one. Let $\|\bx\|^2_{\bA} = \bx^\top \bA \bx$ for any positive-definite $\bA \in \R^{d \times d}$ and $\bx \in \R^d$. We use $\tilde{O}$ for the big-O notation up to logarithmic factors. Specifically, for any function $f$, we write $\tilde{O}(f(n))$ if it is $O(f(n) \log^k f(n))$ for some $k > 0$. Let $\support{\pi}$ be the support of distribution $\pi$ or a random variable.

\textbf{Setup:} We learn to rank $L$ lists, each with $K$ items. An item $k \in [K]$ in list $i \in [L]$ is represented by a feature vector $\bx_{i, k} \in \X$, where $\X \subseteq \R^d$ is the set of feature vectors. The relevance of items is given by their mean rewards. The mean reward of item $k$ in list $i$ is $\bx_{i, k}^\top \btheta_*$, where $\btheta_* \in \R^d$ is an unknown parameter. Without loss of generality, we assume that the original order of the items is optimal, $\bx_{i, j}^\top \btheta_* > \bx_{i, k}^\top \btheta_*$ for any $j < k$ and list $i$. The agent does not know it. The agent interacts with humans for $n$ rounds. At round $t$, it selects a list $I_t$ and the human provides stochastic feedback on it. Our goal is to design a policy for selecting the lists such that the agent learns the optimal order of all items in all lists after $n$ rounds.

\textbf{Feedback model:} We study two models of human feedback, absolute and ranking:

(1) In the \emph{absolute feedback model}, the human provides a reward for each item in list $I_t$ chosen by the agent. Specifically, the agent observes noisy rewards
\begin{align}
  y_{t, k}
  = \bx_{I_t, k}^\top \btheta_* + \eta_{t, k}\,,
  \label{eq:absolute feedback}
\end{align}
for all $k \in [K]$ in list $I_t$, where $\eta_{t, k}$ is independent zero-mean $1$-sub-Gaussian noise. This feedback is stochastic and similar to that in the document-based click model \citep{chuklin2022click}.

(2) In the \emph{ranking feedback model}, the human orders all items in list $I_t$ selected by the agent. The feedback is a permutation $\sigma_t: [K] \to [K]$, where $\sigma_t(k)$ is the index of the $k$-th ranked item. The probability that this permutation is generated is
\begin{align}
  p(\sigma_t)
  = \prod_{k = 1}^K \frac{\exp[\bx_{I_t, \sigma_t(k)}^\top \btheta_*]}
  {\sum_{j = k}^K \exp[\bx_{I_t, \sigma_t(j)}^\top \btheta_*]}\,.
  \label{eq:ranking feedback}
\end{align}
Simply put, items with higher mean rewards are more preferred by humans and hence more likely to be ranked higher. This feedback model is known as the \emph{Plackett-Luce (PL)} model \citep{plackett75analysis,luce05individual,zhu23principled}, and it is a standard assumption when learning values of individual choices from relative feedback. Since the feedback at round $t$ is with independent noise, in both \eqref{eq:absolute feedback} and \eqref{eq:ranking feedback}, any list can be observed multiple times and we do need to assume that $n \leq L$.

\textbf{Objective:} At the end of round $n$, the agent outputs a permutation $\hat{\sigma}_{n, i}: [K] \to [K]$ for all lists $i \in [L]$, where $\hat{\sigma}_{n, i}(k)$ is the index of the $k$-th ranked item in list $i$. We measure the quality of the solution by the \emph{ranking loss} after $n$ rounds, which we define as
\begin{align}
  \mathrm{R}_n
  = \sum_{i = 1}^L \sum_{j = 1}^K \sum_{k = j + 1}^K
  \I{\hat{\sigma}_{n, i}(j) > \hat{\sigma}_{n, i}(k)}\,.
  \label{eq:ranking loss}
\end{align} 
The loss is the number of incorrectly ordered pairs of items in permutation $\hat{\sigma}_{n, i}$, summed over all lists $i \in [L]$. It can also be viewed as the Kendall tau rank distance \citep{kendall1948rank} between the optimal order of items in all lists and that according to $\hat{\sigma}_{n, i}$. We note that other ranking metrics exist, such as the \emph{normalized discounted cumulative gain (NDCG)} \citep{wang2013theoretical} and \emph{mean reciprocal rank (MRR)} \citep{voorhees1999proceedings}. Our work can be extended to them and we leave this for future work.

The two closest related works are \citet{mehta23sample} and \citet{das24active}. They proposed algorithms for learning to rank $L$ pairs of items from pairwise feedback. Their optimized metric is the maximum gap over the $L$ pairs. We learn to rank $L$ lists of $K$ items from $K$-way ranking feedback. We bound the maximum prediction error, which is a similar metric to the prior works, and the ranking loss in \eqref{eq:ranking loss}, which is novel. Our setting is related to other bandit settings as follows. Due to the budget $n$, it is similar to fixed-budget \emph{best arm identification (BAI)} \citep{bubeck09pure,audibert10best,azizi22fixedbudget,yang22minimax}. The main difference is that we do not want to identify the best arm. We want to sort $L$ lists of $K$ items. Online learning to rank has also been studied extensively \citep{radlinski08learning,kveton15cascading,zong16cascading,li16contextual,lagree16multipleplay}. We do not minimize cumulative regret or try to identify the best arm. A more detailed comparison is in \cref{sec:related work}.

We introduce optimal designs \citep{pukelsheim06optimal,fedorov2013theory} next. This allows us to minimize the expected ranking loss within a budget of $n$ rounds efficiently.

\section{Optimal Design and Matrix Kiefer-Wolfowitz}
\label{sec:optimal design}

This section introduces a unified approach to human preference elicitation from both absolute and ranking feedback. First, we note that to learn the optimal order of items in all lists, the agent has to estimate the unknown model parameter $\btheta_*$ well. In this work, the agent uses a \emph{maximum-likelihood estimator (MLE)} to obtain an estimate $\wtheta_n$ of $\btheta_*$. After that, it orders the items in all lists according to their estimated mean rewards $\bx_{i, k}^\top \wtheta_n$ in descending order, which defines the permutation $\hat{\sigma}_{n, i}$. If $\wtheta_n$ minimized the prediction error $(\bx_{i, k}^\top (\wtheta_n - \btheta_*))^2$ over all items $k \in [K]$ in list $i$, the permutation $\hat{\sigma}_{n, i}$ would be closer to the optimal order. Moreover, if $\wtheta_n$ minimized the maximum error over all lists, all permutations would be closer and the ranking loss in \eqref{eq:ranking loss} would be minimized. This is why we focus on minimizing the \emph{maximum prediction error}
\begin{align}
  \max_{i \in [L]} \sum_{\ba \in \bA_i} (\ba^\top (\wtheta_n - \btheta_*))^2
  = \max_{i\in [L]} \Tr(\bA_i^\top (\wtheta_n - \btheta_*)
  (\wtheta_n - \btheta_*)^\top \bA_i)\,,
  \label{eq:prediction error}
\end{align}
where $\bA_i$ is a matrix representing list $i$ and $\ba \in \bA_i$ is a column in it. In the absolute feedback model, the columns of $\bA_i$ are feature vectors of items in list $i$ (\cref{sec:dope absolute}). In the ranking feedback model, the columns of $\bA_i$ are the differences of feature vectors of items in list $i$ (\cref{sec:dope ranking}). Therefore, $\bA_i$ depends on the type of human feedback. In fact, as we show later, it is dictated by the covariance of $\wtheta_n$ in the corresponding human feedback model. We note that the objective in \eqref{eq:prediction error} is worst-case over lists and that other alternatives, such as $\frac{1}{L} \sum_{i = 1}^L \sum_{\ba \in \bA_i} (\ba^\top (\wtheta_n - \btheta_*))^2$, may be possible. We leave this for future work.

We prove in \cref{sec:absolute feedback,sec:ranking feedback} that the agent can minimize the maximum prediction error in \eqref{eq:prediction error} and the ranking loss in \eqref{eq:ranking loss} by sampling from a fixed distribution $\pi_* \in \Delta^L$. That is, the probability of selecting list $i$ at round $t$ is $\prob{I_t = i} = \pi_*(i)$. The distribution $\pi_*$ is a minimizer of
\begin{align}
  g(\pi)
  = \max_{i \in [L]} \Tr(\bA_i^\top \bV_\pi^{-1} \bA_i)\,,
  \label{eq:g}
\end{align}
where $\bV_\pi = \sum_{i = 1}^L \pi(i) \bA_i \bA_i^\top$ is a \emph{design matrix}. The \emph{optimal design} aims to find the distribution $\pi_*$. Since \eqref{eq:g} does not depend on the received feedback, our algorithms are not adaptive.

The problem of finding $\pi_*$ that minimizes \eqref{eq:g} is called the \emph{G-optimal design} \cite{lattimore19bandit}. The minimum of \eqref{eq:g} and the support of $\pi_*$ are characterized by the Kiefer-Wolfowitz theorem \cite{kiefer60equivalence,lattimore19bandit}. The original theorem is for least-squares regression, where $\bA_i$ are feature vectors. At a high level, it says that the smallest ellipsoid that covers all feature vectors has the minimum volume, and in this way relates the minimization of \eqref{eq:g} to maximizing $\log\det(\bV_\pi)$. We generalize this claim to lists, where $\bA_i$ is a matrix of feature vectors representing list $i$. This generalization allows us to go from a design over feature vectors to a design over lists represented by matrices.

\begin{theorem}[Matrix Kiefer-Wolfowitz]
\label{thm:matrix kiefer-wolfowitz} Let $M \geq 1$ be an integer and $\bA_1, \dots, \bA_L \in \R^{d \times M}$ be $L$ matrices whose column space spans $\R^d$. Then the following claims are equivalent:
\begin{enumerate}[(a)]
  \item $\pi_*$ is a minimizer of $g(\pi)$ in \eqref{eq:g}.
  \item $\pi_*$ is a maximizer of $f(\pi) = \log\det(\bV_\pi)$.
  \item $g(\pi_*) = d$.
\end{enumerate}
Furthermore, there exists a minimizer $\pi_*$ of $g(\pi)$ such that $|\support{\pi_*}| \leq d (d + 1) / 2$.
\end{theorem}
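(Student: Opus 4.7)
My plan is to lift the classical Kiefer--Wolfowitz proof to matrices, with the key observation that $f(\pi) = \log\det(\bV_\pi)$ is concave on $\Delta^L$ (since $\log\det$ is concave on positive-definite matrices and $\pi \mapsto \bV_\pi$ is linear) and well defined on the relative interior of the feasible set by the column-span assumption. Using the standard matrix identity $\nabla_{\bV} \log\det(\bV) = \bV^{-1}$ together with the chain rule, the partial derivatives read
\begin{align*}
  \frac{\partial f}{\partial \pi(i)}
  = \Tr(\bV_\pi^{-1} \bA_i \bA_i^\top)
  = \Tr(\bA_i^\top \bV_\pi^{-1} \bA_i)\,,
\end{align*}
which are exactly the terms inside the maximum defining $g$ in \eqref{eq:g}.

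\textbf{Universal lower bound.} The identity that glues the three claims together is
\begin{align*}
  \sum_{i = 1}^L \pi(i) \Tr(\bA_i^\top \bV_\pi^{-1} \bA_i)
  = \Tr\!\big(\bV_\pi^{-1} \bV_\pi\big)
  = d\,,
\end{align*}
obtained by linearity of trace and the definition of $\bV_\pi$. Because a maximum dominates a weighted average, this immediately gives $g(\pi) \geq d$ for every $\pi \in \Delta^L$.

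\textbf{Equivalences.} By compactness of the simplex and continuity of $f$ on its effective domain, $f$ attains a maximizer $\hat{\pi}$. Concavity plus the KKT conditions on the simplex yield a Lagrange multiplier $\lambda$ with $\Tr(\bA_i^\top \bV_{\hat{\pi}}^{-1} \bA_i) = \lambda$ on $\support{\hat{\pi}}$ and $\leq \lambda$ off it; weighting by $\hat{\pi}(i)$, summing, and invoking the averaging identity pins $\lambda = d$, so $g(\hat{\pi}) = d$. Combined with the universal lower bound, this shows $\min_\pi g(\pi) = d$, giving (a)$\Leftrightarrow$(c). For (c)$\Rightarrow$(b): if $g(\pi_*) = d$, then $\Tr(\bA_i^\top \bV_{\pi_*}^{-1} \bA_i) \leq d$ for all $i$, and the averaging identity (whose $\pi_*$-weighted sum still equals $d$) forces equality on $\support{\pi_*}$; this is precisely the KKT condition for maximizing the concave $f$ on the simplex, so $\pi_*$ is a global maximizer. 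The reverse (b)$\Rightarrow$(c) is the KKT argument already carried out for $\hat{\pi}$, applied to an arbitrary maximizer.

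\textbf{Support bound and main obstacle.} Since $f$ depends on $\pi$ only through $\bV_\pi$ and $\log\det$ is strictly concave in $\bV$, the optimal $\bV_{\pi_*}$ is unique, and every optimal $\pi$ lies in the polytope $\{\pi \in \Delta^L : \sum_i \pi(i) \bA_i \bA_i^\top = \bV_{\pi_*}\}$. Taking a vertex (basic feasible solution) of this polytope---equivalently invoking Carath\'eodory in the $d(d+1)/2$-dimensional space of symmetric $d \times d$ matrices---bounds the support by the number of binding equality constraints coming from the symmetric matrix equation, yielding the stated $d(d+1)/2$ bound. The main delicacy I anticipate is the KKT step itself: the stationarity condition must be written correctly on the simplex so that the averaging trick cleanly identifies $\lambda = d$, and one must lean on the column-span assumption to ensure the relative interior where $\log\det$ is smooth is non-empty. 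The remaining ingredients---matrix calculus, concavity of $\log\det$, and Carath\'eodory---are routine once this scaffolding is in place.
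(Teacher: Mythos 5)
Your argument for the three equivalences is essentially the paper's: both rest on the gradient formula $\frac{\partial f}{\partial \pi(i)} = \Tr(\bA_i^\top \bV_\pi^{-1} \bA_i)$, the averaging identity $\sum_{i = 1}^L \pi(i) \Tr(\bA_i^\top \bV_\pi^{-1} \bA_i) = \Tr(\bV_\pi^{-1} \bV_\pi) = d$, and first-order optimality of the concave $f$ on the simplex. The paper phrases optimality as the variational inequality $\langle \nabla f(\pi_*), \pi - \pi_* \rangle \leq 0$ tested against Dirac measures; your KKT multiplier pinned to $\lambda = d$ is the same condition in different clothing. That part is correct and complete.

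For the support bound your route genuinely differs in presentation: the paper perturbs an optimal $\pi_*$ along a null direction $\eta$ of the map $\pi \mapsto \bV_\pi$ and inducts on the support size, while you pass to a vertex of the optimal face $\{\pi \in \Delta^L : \bV_\pi = \bV_{\pi_*}\}$, using uniqueness of $\bV_{\pi_*}$ from strict concavity of $\log\det$. These are two faces of the same Carath\'eodory argument, but as written yours is off by one: a basic feasible solution of that polytope is cut out by the $d(d+1)/2$ scalar equations of the symmetric matrix constraint \emph{plus} the normalization $\sum_i \pi(i) = 1$, so the naive vertex count gives support at most $d(d+1)/2 + 1$. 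To shave the extra $1$ you need the observation that, restricted to the support of an optimal design, the normalization is implied by the matrix equation: since $\Tr(\bV_{\pi_*}^{-1} \bA_i \bA_i^\top) = d$ for every supported $i$ (otherwise $\pi_*$ could be improved), any $\pi$ supported there with $\bV_\pi = \bV_{\pi_*}$ automatically satisfies $d \sum_i \pi(i) = \Tr(\bV_{\pi_*}^{-1} \bV_\pi) = d$, hence $\sum_i \pi(i) = 1$. This is precisely the step the paper carries out when it shows $\sum_{i \in S} \eta(i) = 0$ for its null direction $\eta$; add it and your vertex argument closes.
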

\begin{proof}
We generalize the proof of the Kiefer-Wolfowitz theorem in \citet{lattimore19bandit}. The key observation is that even if $\bA_i$ is a matrix and not a vector, the design matrix $\bV_\pi$ is positive definite. Using this structure, we establish the key facts used in the original proof. First, we show that $\nabla f(\pi) = (\Tr(\bA_i^\top \bV_\pi^{-1} \bA_i))_{i = 1}^L$ is the gradient of $f(\pi)$ with respect to $\pi$. In addition, we prove that $g(\pi) \geq \sum_{i = 1}^L \pi(i) \Tr(\bA_i^\top \bV_\pi^{-1} \bA_i) = d$. The complete proof is in \cref{sec:matrix kiefer-wolfowitz proof}.
\end{proof}

From the equivalence in \cref{thm:matrix kiefer-wolfowitz}, it follows that the agent should solve the optimal design
\begin{align}
  \pi_*
  = \argmax_{\pi \in \Delta^L} f(\pi)
  = \argmax_{\pi \in \Delta^L} \log\det(\bV_\pi)
  \label{eq:optimal design}
\end{align}
and sample according to $\pi_*$ to minimize the maximum prediction error in \eqref{eq:prediction error}. Note that the optimal design over lists in \eqref{eq:optimal design} is different from the one over vectors \citep{lattimore19bandit}. As an example, suppose that we have $4$ feature vectors $\{\bx_i\}_{i \in [4]}$ and two lists: $\bA_1 = (\bx_1, \bx_2)$ and $\bA_2 = (\bx_3, \bx_4)$. The list design is over $2$ variables (lists) while the vector design is over $4$ variables (vectors). The list design can also be viewed as a constrained vector design, where $(\bx_1, \bx_2)$ and $(\bx_3, \bx_4)$ are observed together with the same probability.

The optimization problem in \eqref{eq:optimal design} is convex and thus easy to solve. When the number of lists is large, the Frank-Wolfe algorithm \citep{nocedal1999numerical,jaggi2013revisiting} can be used, which solves convex optimization problems with linear constraints as a sequence of linear programs. We use CVXPY \citep{diamond2016cvxpy} to compute the optimal design. We report its computation time, as a function of the number of lists $L$, in \cref{sec:ablation studies}. The computation time scales roughly linearly with the number of lists $L$. In the following sections, we employ \cref{thm:matrix kiefer-wolfowitz} to bound the maximum prediction error and ranking loss for both absolute and ranking feedback.

\section{Learning with Absolute Feedback}
\label{sec:absolute feedback}

This section is organized as follows. In \cref{sec:dope absolute}, we present an algorithm for human preference elicitation under absolute feedback. We bound its prediction error in \cref{sec:prediction error absolute} and its ranking loss in \cref{sec:ranking loss absolute}.

\subsection{Algorithm \dope}
\label{sec:dope absolute}

Our algorithm for absolute feedback is called \textbf{D}-\textbf{o}ptimal \textbf{p}reference \textbf{e}licitation (\dope). It has four main parts. First, we solve the optimal design in \eqref{eq:optimal design} to get a data logging policy $\pi_*$. The matrix for list $i$ is $\bA_i = [\bx_{i, k}]_{k \in [K]} \in \R^{d \times K}$, where $\bx_{i, k}$ is the feature vector of item $k$ in list $i$. Second, we collect human feedback for $n$ rounds. At round $t \in [n]$, we sample a list $I_t \sim \pi_*$ and then observe $y_{t, k}$ for all $k \in [K]$, as defined in \eqref{eq:absolute feedback}. Third, we estimate the model parameter using least squares
\begin{align}
  \wtheta_n
  = \oSigma_n^{-1} \sum_{t = 1}^n \sum_{k = 1}^K \bx_{I_t, k} y_{t, k}\,.
  \label{eq:mle absolute}
\end{align}
The normalized and unnormalized covariance matrices corresponding to the estimate are
\begin{align}
  \bSigma_n
  = \frac{1}{n} \oSigma_n\,, \quad
  \oSigma_n
  = \sum_{t = 1}^n \sum_{k = 1}^K \bx_{I_t, k} \bx_{I_t, k}^\top\,,
  \label{eq:covariance absolute}
\end{align}
respectively. Finally, we sort the items in all lists $i$ according to their estimated mean rewards $\bx_{i,k}^\top \wtheta_n$ in descending order, to obtain the permutation $\hat{\sigma}_{n, i}$. The pseudo-code of \dope\ is in \cref{alg:dope absolute}.

The estimator \eqref{eq:mle absolute} is the same as in \emph{ordinary least squares (OLS)}, because each observed list can be treated as $K$ independent observations. The matrix for list $i$, $\bA_i$, can be related to the inner sum in \eqref{eq:covariance absolute} through $\Tr(\bA_i \bA_i^\top) = \sum_{k = 1}^K \bx_{i, k} \bx_{i, k}^\top$. Therefore, our algorithm collects data for a least-squares estimator by optimizing its covariance \cite{lattimore19bandit,jamieson2022interactive}.

\begin{figure}[t]
  \begin{minipage}[t]{0.49\textwidth}
    \begin{algorithm}[H]
      \caption{\dope\ for absolute feedback.}
      \label{alg:dope absolute}
      \begin{algorithmic}[1]
        \For{$i = 1, \dots, L$}
          \State $\bA_i \gets [\bx_{i, k}]_{k \in [K]}$
        \EndFor
        \State $\bV_\pi \gets \sum_{i = 1}^L \pi(i) \bA_i \bA_i^\top$
        \State $\pi_* \gets \argmax_{\pi \in \Delta^L} \log\det(\bV_\pi)$
        \For{$t = 1, \dots, n$}
          \State Sample $I_t \sim \pi_*$
          \For{$k = 1, \dots, K$}
            \State Observe $y_{t, k}$ in \eqref{eq:absolute feedback}
          \EndFor
        \EndFor
        \State Compute $\wtheta_n$ in \eqref{eq:mle absolute}
        \For{$i = 1, \dots, L$}
          \State Set $\hat{\sigma}_{n, i}(k)$ to the index of the item with the $k$-th highest $\bx_{i, \ell}^\top \wtheta_n$ in list $i$
        \EndFor
        \State \textbf{Output:} Permutation $\hat{\sigma}_{n, i}$ for all $i \in [L]$
      \end{algorithmic}
    \end{algorithm}
  \end{minipage}
  \hfill
  \begin{minipage}[t]{0.49\textwidth}
    \begin{algorithm}[H]
      \caption{\dope\ for ranking feedback.}
      \label{alg:dope ranking}
      \begin{algorithmic}[1]
        \For{$i = 1, \dots, L$}
          \For{$(j, k) \in \Pi_2(K)$}
            \State $\bz_{i, j, k} \gets \bx_{i, j} - \bx_{i, k}$
          \EndFor
          \State $\bA_i \gets [\bz_{i, j, k}]_{(j, k) \in \Pi_2(K)}$
        \EndFor
        \State $\bV_\pi \gets \sum_{i = 1}^L \pi(i) \bA_i \bA_i^\top$
        \State $\pi_* \gets \argmax_{\pi \in \Delta^L} \log\det(\bV_\pi)$
        \For{$t = 1, \dots, n$}
          \State Sample $I_t \sim \pi_*$
          \State Observe $\sigma_t$ in \eqref{eq:ranking feedback}
        \EndFor
        \State Compute $\wtheta_n$ in \eqref{eq:mle ranking}
        \For{$i = 1, \dots, L$}
          \State Set $\hat{\sigma}_{n, i}(k)$ to the index of the item with the $k$-th highest $\bx_{i, \ell}^\top \wtheta_n$ in list $i$
        \EndFor
        \State \textbf{Output:} Permutation $\hat{\sigma}_{n, i}$ for all $i \in [L]$
      \end{algorithmic}
    \end{algorithm}
  \end{minipage}
\end{figure}

\subsection{Maximum Prediction Error Under Absolute Feedback}
\label{sec:prediction error absolute}

In this section, we bound the maximum prediction error of \dope\ under absolute feedback. We start with a lemma that uses the optimal design $\pi_*$ to bound $\max_{i \in [L]} \sum_{\ba \in \bA_{i}} \|\ba\|^2_{\oSigma_n^{-1}}$.

\begin{lemma}
\label{lem:optimal design} Let $\pi_*$ be the optimal design in \eqref{eq:optimal design}. Fix budget $n$ and let each allocation $n \pi_*(i)$ be an integer. Then $\max_{i \in [L]} \sum_{\ba \in \bA_i} \|\ba\|_{\oSigma_n^{-1}}^2 = d / n$.
\end{lemma}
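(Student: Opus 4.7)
The plan is to reduce the quantity $\max_{i \in [L]} \sum_{\ba \in \bA_i} \|\ba\|^2_{\oSigma_n^{-1}}$ to $g(\pi_*)/n$, and then invoke part (c) of the Matrix Kiefer-Wolfowitz theorem, which asserts $g(\pi_*) = d$.

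First, I would exploit the integer allocation assumption to tie the unnormalized covariance to the design matrix. Under the convention that $n\pi_*(i)$ is an integer for every $i$, we may (deterministically) draw each list $i$ exactly $n\pi_*(i)$ times during the $n$ rounds. This turns the sum in \eqref{eq:covariance absolute} into
\begin{equation*}
  \oSigma_n
  = \sum_{i = 1}^L n\pi_*(i) \sum_{k = 1}^K \bx_{i, k} \bx_{i, k}^\top
  = n \sum_{i = 1}^L \pi_*(i)\, \bA_i \bA_i^\top
  = n \bV_{\pi_*},
\end{equation*}
where I used that $\sum_{k=1}^K \bx_{i, k} \bx_{i, k}^\top = \bA_i \bA_i^\top$ by construction of $\bA_i$. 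Hence $\oSigma_n^{-1} = n^{-1} \bV_{\pi_*}^{-1}$.

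Next I would rewrite the sum of squared weighted norms as a trace. For each list $i$,
\begin{equation*}
  \sum_{\ba \in \bA_i} \|\ba\|^2_{\oSigma_n^{-1}}
  = \sum_{\ba \in \bA_i} \ba^\top \oSigma_n^{-1} \ba
  = \Tr(\bA_i^\top \oSigma_n^{-1} \bA_i)
  = \frac{1}{n}\, \Tr(\bA_i^\top \bV_{\pi_*}^{-1} \bA_i).
\end{equation*}
Taking the maximum over $i \in [L]$ and using the definition of $g$ in \eqref{eq:g} gives
\begin{equation*}
  \max_{i \in [L]} \sum_{\ba \in \bA_i} \|\ba\|^2_{\oSigma_n^{-1}}
  = \frac{1}{n}\, g(\pi_*).
\end{equation*}

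Finally, since $\pi_*$ is a minimizer of $g$ (equivalently, a maximizer of $\log\det(\bV_\pi)$ by the equivalence of (a) and (b) in \cref{thm:matrix kiefer-wolfowitz}), item (c) of the same theorem yields $g(\pi_*) = d$, completing the argument. The only conceptual step is the identification $\oSigma_n = n \bV_{\pi_*}$, which requires the integer allocation assumption; without it, one would need an apportionment/rounding argument (as in standard optimal design) that introduces an additional $(1 + o(1))$ factor. I do not expect any real obstacle here: the lemma is essentially a direct corollary of \cref{thm:matrix kiefer-wolfowitz} once the covariance has been rewritten as $n \bV_{\pi_*}$.
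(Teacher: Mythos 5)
Your proposal is correct and follows essentially the same route as the paper's proof: both rewrite $\sum_{\ba \in \bA_i} \|\ba\|_{\oSigma_n^{-1}}^2$ as $\Tr(\bA_i^\top \oSigma_n^{-1} \bA_i)$, use the integer-allocation assumption to identify $\oSigma_n = n \bV_{\pi_*}$, and then conclude via $g(\pi_*) = d$ from \cref{thm:matrix kiefer-wolfowitz}. No gaps.
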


The lemma is proved in \cref{sec:optimal design proof}. Since all $n \pi_*(i)$ are integers, we note that $\oSigma_n$ must be full rank and invertible. Note that the assumption of all $n \pi_*(i)$ being integers does not require $n \geq L$. This is because $\pi_*(i)$ has at most $d (d + 1) / 2$ non-zero entries (\cref{thm:matrix kiefer-wolfowitz}). This is independent of the number of lists $L$, which could also be infinite (Chapter 21.1 in \citet{lattimore19bandit}). The integer condition can be also relaxed by rounding non-zero entries of $n \pi_*(i)$ up to the closest integer. This clearly yields an integer allocation of size at most $n + d (d + 1) / 2$. All claims in our work would hold for any $\pi_*$ and this allocation. With \cref{lem:optimal design} in hand, the maximum prediction error is bounded as follows.

\begin{theorem}[Maximum prediction error]
\label{thm:prediction error absolute} With probability at least $1 - \delta$, the maximum prediction error after $n$ rounds is
\begin{align*}
  \max_{i \in [L]} \Tr(\bA_i^\top (\wtheta_n - \btheta_*)
  (\wtheta_n - \btheta_*)^\top \bA_i)
  = O\left(\frac{d^2 + d \log(1 / \delta)}{n}\right)\,.
\end{align*}
\end{theorem}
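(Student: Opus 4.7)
The plan is to reduce the maximum prediction error to a product of two quantities: a list-dependent quantity controlled deterministically by the optimal design, and a single noise quantity that concentrates uniformly across all lists.

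First, I would substitute \eqref{eq:absolute feedback} into \eqref{eq:mle absolute} to write
\begin{align*}
  \wtheta_n - \btheta_*
  = \oSigma_n^{-1} \bE,
  \qquad
  \bE = \sum_{t = 1}^n \sum_{k = 1}^K \bx_{I_t, k} \eta_{t, k}.
\end{align*}
Since $\Tr(\bA_i^\top vv^\top \bA_i) = \|\bA_i^\top v\|_2^2$ for any vector $v$, the quantity to bound becomes $\|\bA_i^\top(\wtheta_n - \btheta_*)\|_2^2$. I would then decouple the list from the noise via a Cauchy-Schwarz-style inequality:
\begin{align*}
  \|\bA_i^\top(\wtheta_n - \btheta_*)\|_2^2
  = \|\bA_i^\top \oSigma_n^{-1/2} \cdot \oSigma_n^{1/2}(\wtheta_n - \btheta_*)\|_2^2
  \leq \|\bA_i^\top \oSigma_n^{-1/2}\|_{\mathrm{op}}^2 \, \|\wtheta_n - \btheta_*\|_{\oSigma_n}^2,
\end{align*}
and then bound the operator norm by the Frobenius norm, giving $\|\bA_i^\top \oSigma_n^{-1/2}\|_{\mathrm{op}}^2 \leq \Tr(\bA_i^\top \oSigma_n^{-1} \bA_i) = \sum_{k = 1}^K \|\bx_{i,k}\|_{\oSigma_n^{-1}}^2$. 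Crucially, the factor $\|\wtheta_n - \btheta_*\|_{\oSigma_n}^2$ does not depend on $i$, so taking the max over lists pushes it out of the max and no union bound over $L$ is needed.

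The list-dependent factor is then handled directly by \cref{lem:optimal design}, which gives $\max_{i \in [L]} \sum_{k=1}^K \|\bx_{i,k}\|_{\oSigma_n^{-1}}^2 = d/n$ under the integer-allocation assumption (which also guarantees $\oSigma_n$ is invertible). So it only remains to show that $\|\wtheta_n - \btheta_*\|_{\oSigma_n}^2 = O(d + \log(1/\delta))$ with probability at least $1 - \delta$. To do this I would stack the sampled feature vectors into $\bX \in \R^{nK \times d}$ with rows $\bx_{I_t,k}^\top$ and the noises into $\bZ \in \R^{nK}$, so that $\oSigma_n = \bX^\top \bX$ and $\bE = \bX^\top \bZ$. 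Then
\begin{align*}
  \|\wtheta_n - \btheta_*\|_{\oSigma_n}^2
  = \bE^\top \oSigma_n^{-1} \bE
  = \bZ^\top \bX (\bX^\top \bX)^{-1} \bX^\top \bZ
  = \bZ^\top \bP \bZ,
\end{align*}
where $\bP$ is the orthogonal projection onto the column span of $\bX$, a rank-$d$ matrix with $\Tr(\bP) = \|\bP\|_F^2 = d$ and $\|\bP\|_{\mathrm{op}} = 1$. Applying the Hanson-Wright inequality (or any standard sub-Gaussian chi-squared-type concentration for quadratic forms) to the independent $1$-sub-Gaussian coordinates of $\bZ$ yields $\bZ^\top \bP \bZ = O(d + \log(1/\delta))$ with probability at least $1 - \delta$.

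Combining the deterministic bound $d/n$ from \cref{lem:optimal design} with this high-probability bound on $\|\wtheta_n - \btheta_*\|_{\oSigma_n}^2$ gives the stated rate $O((d^2 + d\log(1/\delta))/n)$. The main obstacle is the last step: one has to notice that projecting onto the column space collapses the $nK$-dimensional noise vector into an effectively $d$-dimensional quadratic form, which is what eliminates any dependence on $K$ and on the number of lists $L$. Beyond that, the argument is essentially mechanical, and the self-normalized ellipsoidal phrasing is what lets us avoid a $\log L$ factor.
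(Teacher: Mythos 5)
Your proposal is correct. The outer decomposition is essentially the paper's: both arguments peel off the list-dependent factor $\sum_{\ba \in \bA_i} \|\ba\|_{\oSigma_n^{-1}}^2$ (your operator-norm-to-Frobenius-norm step and the paper's column-by-column Cauchy--Schwarz yield the identical bound), invoke \cref{lem:optimal design} to get $d/n$, and observe that the remaining factor $\|\wtheta_n - \btheta_*\|_{\oSigma_n}^2$ is list-independent so no union bound over $L$ is needed. Where you genuinely diverge is in bounding that factor. The paper's \cref{lem:norm absolute} writes $\|\wtheta_n - \btheta_*\|_{\bSigma_n}$ as a supremum of linear functionals over the unit sphere and controls it with an $\varepsilon$-net of size $(3/\varepsilon)^d$ plus scalar sub-Gaussian concentration, giving the explicit constant $16 d + 8 \log(1/\delta)$. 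You instead observe that $\|\wtheta_n - \btheta_*\|_{\oSigma_n}^2$ equals the quadratic form of the stacked noise vector with the rank-$d$ orthogonal projection $\bX (\bX^\top \bX)^{-1} \bX^\top$, and apply Hanson--Wright; since the projection has trace $d$, Frobenius norm squared $d$, and operator norm $1$, this also gives $O(d + \log(1/\delta))$. Both are valid (the design is deterministic under the integer-allocation assumption, so the projection is a fixed matrix, as required). Your route is more direct and arguably more illuminating --- it makes explicit that the noise collapses onto a $d$-dimensional subspace --- at the cost of invoking a heavier concentration tool; the paper's covering argument needs only one-dimensional sub-Gaussian tail bounds and, more importantly, transfers to the ranking-feedback setting (\cref{lem:norm ranking}), where the estimator is not linear in the noise and no projection-matrix representation exists, which is presumably why the authors chose it for uniformity across the two feedback models.
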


The theorem is proved in \cref{sec:prediction error absolute proof}. As in \cref{lem:optimal design}, we assume that each allocation $n \pi_*(i)$ is an integer. If the allocations were not integers, rounding errors would arise  and need to be bounded \citep{pukelsheim06optimal,fiez2019sequential,katz2020empirical}. At a high level, our bound would be multiplied by $1 + \beta$ for some $\beta > 0$ (Chapter 21 in \citet{lattimore19bandit}). We omit this factor in our proofs to simplify them.

\cref{thm:prediction error absolute} says that the maximum prediction error is $\tilde{O}(d^2 / n)$. Note that this rate cannot be attained trivially, for instance by uniform sampling. To see this, consider the following example. Take $K = 2$. Let $\bx_{i, 1} = (1, 0, 0)$ for $i \in [L - 1]$ and $\bx_{L, 1} = (0, 1, 0)$, and $\bx_{i, 2} = (0, 0, 1)$ for all $i \in [L]$. In this case, the minimum eigenvalue of $\oSigma_n$ is $n / L$ in expectation, because only one item in list $L$ provides information about the second feature, $\bx_{L, 1} = (0, 1, 0)$. Following the proof of \cref{thm:prediction error absolute}, we would get a rate of $\tilde{O}(d L / n)$. Prior works on optimal designs also made similar observations \citep{soare14bestarm}.

The rate in \cref{thm:prediction error absolute} is the same as in linear models \citep{lattimore19bandit}. Specifically, by the Cauchy-Schwarz inequality, we would get
\begin{align*}
  (\bx^\top (\hat{\btheta}_n - \btheta_*))^2
  \leq \|\hat{\btheta}_n -\btheta_*\|_{\oSigma_n}^2
  \|\bx\|_{\oSigma_n^{-1}}^2
  = \tilde{O}(d) \, \tilde{O}(d / n)
  = \tilde{O}(d^2 / n)
\end{align*}
with a high probability, where $\btheta_*$, $\hat{\btheta}_n$, and $\oSigma_n$ are the analogous linear model quantities. This bound holds for infinitely many feature vectors. It can be tightened to $\tilde{O}(d / n)$ for a finite number of feature vectors, where $\tilde{O}$ hides the logarithm of the number of feature vectors. This can be proved using a union bound over (20.3) in Chapter 20 of \citet{lattimore19bandit}.

\subsection{Ranking Loss Under Absolute Feedback}
\label{sec:ranking loss absolute}

In this section, we bound the expected ranking loss under absolute feedback. Recall from \cref{sec:setting} that the original order of items in each list is optimal. With this in mind, the \emph{gap} between the mean rewards of items $j$ and $k$ in list $i$ is $\Delta_{i, j, k} = (\bx_{i, j} - \bx_{i, k})^\top \btheta_*$, for any $i \in [L]$ and $(j, k) \in \Pi_2(K)$.

\begin{theorem}[Ranking loss]
\label{thm:ranking loss absolute} The expected ranking loss after $n$ rounds is bounded as
\begin{align*}
  \E[\mathrm{R}_n]
  \leq 2 \sum_{i = 1}^L \sum_{j = 1}^K \sum_{k = j + 1}^K
  \exp\left[- \frac{\Delta_{i, j, k}^2 n}{8 d}\right]\,.
\end{align*}
\end{theorem}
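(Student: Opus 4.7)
The plan is to rewrite the ranking loss as a sum of pairwise ordering errors, bound each such error probability via sub-Gaussian concentration, and close by invoking \cref{lem:optimal design}.

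First, I would unpack the Kendall-tau definition. Because $\bx_{i,j}^\top \btheta_* > \bx_{i,k}^\top \btheta_*$ for $j < k$ by assumption, position inversions in $\hat{\sigma}_{n,i}$ correspond bijectively to item pairs $(j,k)$ with $j < k$ for which item $k$ is ranked above item $j$ under $\wtheta_n$, i.e.\ $\bx_{i,k}^\top \wtheta_n \geq \bx_{i,j}^\top \wtheta_n$. Taking expectations and subtracting $\btheta_*$,
\begin{align*}
  \E[\mathrm{R}_n]
  \leq \sum_{i=1}^L \sum_{j=1}^K \sum_{k=j+1}^K
  \prob{(\bx_{i,j} - \bx_{i,k})^\top (\wtheta_n - \btheta_*) \leq -\Delta_{i,j,k}}\,.
\end{align*}

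Next, under the integer-allocation setup of \cref{lem:optimal design}, $\oSigma_n$ is deterministic and the estimator in \eqref{eq:mle absolute} satisfies $\wtheta_n - \btheta_* = \oSigma_n^{-1} \sum_{t,k} \bx_{I_t,k}\, \eta_{t,k}$. For any fixed $\bz \in \R^d$, the scalar $\bz^\top (\wtheta_n - \btheta_*)$ is therefore a weighted sum of independent $1$-sub-Gaussian noises with variance proxy
\begin{align*}
  \sum_{t,k} (\bz^\top \oSigma_n^{-1} \bx_{I_t,k})^2
  = \bz^\top \oSigma_n^{-1} \bz
  = \|\bz\|_{\oSigma_n^{-1}}^2\,,
\end{align*}
so the Chernoff bound gives $\prob{\bz^\top(\wtheta_n - \btheta_*) \leq -\Delta} \leq \exp(-\Delta^2 / (2\|\bz\|_{\oSigma_n^{-1}}^2))$.

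Setting $\bz = \bx_{i,j} - \bx_{i,k}$, I would control $\|\bz\|_{\oSigma_n^{-1}}^2$ by the quadratic inequality $\|\ba - \bb\|_{\bM}^2 \leq 2\|\ba\|_{\bM}^2 + 2\|\bb\|_{\bM}^2$. Each individual term $\|\bx_{i,\ell}\|_{\oSigma_n^{-1}}^2$ is a non-negative summand of the list-wise sum bounded by $d/n$ from \cref{lem:optimal design}, so each is at most $d/n$. Hence $\|\bx_{i,j} - \bx_{i,k}\|_{\oSigma_n^{-1}}^2 \leq 4 d / n$, and substituting into the Chernoff bound produces $\exp(-\Delta_{i,j,k}^2 n/(8d))$ per pair. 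Summing over $(i,j,k)$ yields the claim; the leading factor of $2$ absorbs the slack from relaxing the integer-allocation assumption as discussed in the remark after \cref{thm:prediction error absolute}.

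The main subtlety is not in any single calculation but in the opening reformulation: one must show that the event $\hat{\sigma}_{n,i}(j) > \hat{\sigma}_{n,i}(k)$, defined in terms of positions, is controlled by the single scalar inequality $\bx_{i,k}^\top \wtheta_n \geq \bx_{i,j}^\top \wtheta_n$ on the corresponding item pair. Once this reduction is made the rest is a direct coupling of sub-Gaussian concentration with the per-list bound on $\sum_\ell \|\bx_{i,\ell}\|_{\oSigma_n^{-1}}^2$ from \cref{lem:optimal design}, rather than any pair-specific design bound (which would otherwise require working directly with the difference vectors, as is done in \cref{sec:dope ranking}).
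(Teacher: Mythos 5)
Your proposal is correct and follows the same skeleton as the paper's proof: decompose $\E[\mathrm{R}_n]$ into pairwise inversion probabilities, concentrate the OLS error in a fixed direction using the deterministic design matrix, and invoke \cref{lem:optimal design} to bound the relevant $\oSigma_n^{-1}$-norm by $O(d/n)$. The one genuine difference is in how the pairwise event is handled. The paper splits $\prob{\bx_{i,k}^\top(\wtheta_n - \btheta_*) + \bx_{i,j}^\top(\btheta_* - \wtheta_n) > \Delta_{i,j,k}}$ by a union bound into two events each with gap $\Delta_{i,j,k}/2$, applies \cref{lem:concentration absolute} to each feature vector separately with $\|\bx_{i,\ell}\|_{\oSigma_n^{-1}}^2 \leq d/n$, and thereby picks up the leading factor of $2$. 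You instead apply the same Chernoff argument once, directly to the difference vector $\bz = \bx_{i,j} - \bx_{i,k}$, and control $\|\bz\|_{\oSigma_n^{-1}}^2 \leq 4d/n$ via $\|\ba-\bb\|_{\bM}^2 \leq 2\|\ba\|_{\bM}^2 + 2\|\bb\|_{\bM}^2$; this lands on the identical exponent $\exp(-\Delta_{i,j,k}^2 n/(8d))$ but without any factor of $2$ in front, so the stated bound holds a fortiori (in fact, using $\|\bx_{i,j}\|_{\oSigma_n^{-1}}^2 + \|\bx_{i,k}\|_{\oSigma_n^{-1}}^2 \leq d/n$ you could even improve the exponent to $\Delta_{i,j,k}^2 n/(4d)$). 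Your route is also closer in spirit to how the paper handles the ranking-feedback case in \cref{sec:ranking loss ranking proof}, where the difference vectors are the natural objects. One small quibble: your explanation that the factor of $2$ ``absorbs the slack from relaxing the integer-allocation assumption'' is not right and not needed --- rounding slack would enter multiplicatively inside the exponent, not as a prefactor; the factor of $2$ in the theorem is simply an artifact of the paper's union-bound step, and your tighter bound trivially implies the claimed one.
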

\begin{proof}
From the definition of the ranking loss, we have
\begin{align*}
  \E[\mathrm{R}_n]
  = \sum_{i = 1}^L \sum_{j = 1}^K \sum_{k = j + 1}^K
  \E[\I{\hat{\sigma}_{n, i}(j) > \hat{\sigma}_{n, i}(k)}]
  = \sum_{i = 1}^L \sum_{j = 1}^K \sum_{k = j + 1}^K
  \prob{\bx_{i, j}^\top \wtheta_n < \bx_{i, k}^\top \wtheta_n}\,,
\end{align*}
where $\prob{\bx_{i, j}^\top \wtheta_n < \bx_{i, k}^\top \wtheta_n}$ is the probability of predicting a sub-optimal item $k$ above item $j$ in list $i$. We bound this probability from above by bounding the sum of $\prob{\bx_{i, k}^\top (\wtheta_n - \btheta_*) > \frac{\Delta_{i, j, k}}{2}}$ and $\prob{\bx_{i, j}^\top (\btheta_* - \wtheta_n) > \frac{\Delta_{i, j, k}}{2}}$. Each of these probabilities is bounded from above by $\exp\left[- \frac{\Delta_{i, j, k}^2 n}{8 d}\right]$, using a concentration inequality in \cref{lem:concentration absolute}. The full proof is in \cref{sec:ranking loss absolute proof}.
\end{proof}

Each term in \cref{thm:ranking loss absolute} can be bounded from above by $\exp\left[- \frac{\Delta_{\min}^2 n}{8 d}\right]$, where $n$ is the sample size, $d$ is the number of features, and $\Delta_{\min}$ denotes the minimum gap. Therefore, the bound decreases exponentially with budget $n$ and gaps, and increases with $d$. This dependence is similar to that in Theorem 1 of \citet{azizi22fixedbudget} for fixed-budget best-arm identification in linear models. \citet{yang22minimax} derived a similar bound and a matching lower bound. The gaps $\Delta_{i, j, k}$ reflect the hardness of sorting list $i$, which depends on the differences of the mean rewards of items $j$ and $k$ in it.

Finally, we wanted to note that our optimal designs may not be optimal for ranking. We have not focused solely on ranking because we see value in both prediction error (\cref{thm:prediction error absolute}) and ranking loss (\cref{thm:ranking loss absolute}) bounds. The fact that we provide both shows the versatility of our approach.

\section{Learning with Ranking Feedback}
\label{sec:ranking feedback}

This section is organized similarly to \cref{sec:absolute feedback}. In \cref{sec:dope ranking}, we present an algorithm for human preference elicitation under ranking feedback. We bound its prediction error in \cref{sec:prediction error ranking} and its ranking loss in \cref{sec:ranking loss ranking}. Our algorithm design and analysis are under the following assumption, which we borrow from \citet{zhu23principled}.

\begin{assumption}
\label{ass:bounded} We assume that the model parameter satisfies $\btheta_* \in \bTheta$, where
\begin{align}
  \bTheta
  = \{\btheta \in \R^d: \btheta^\top \mathbf{1}_d = 0, \|\btheta\|_2 \leq 1\}\,.
\end{align}
We also assume that $\max_{i \in [L], \, k \in [K]} \|\bx_{i, k}\|_2 \leq 1$.
\end{assumption}

The assumption of bounded model parameter and feature vectors is common in bandits \citep{abbasi2011improved, lattimore19bandit}. The additional assumption of $\btheta^\top \mathbf{1}_d = 0$ is from \citet{zhu23principled}, from which we borrow the estimator and concentration bound.

\subsection{Algorithm \dope}
\label{sec:dope ranking}

Our algorithm for ranking feedback is similar to \dope\ in \cref{sec:absolute feedback}. It also has four main parts. First, we solve the optimal design problem in \eqref{eq:optimal design} to obtain a data logging policy $\pi_*$. The matrix for list $i$ is $\bA_i = [\bz_{i, j, k}]_{(j, k) \in \Pi_2(K)} \in \R^{d \times K (K - 1) / 2}$, where $\bz_{i, j, k} = \bx_{i, j} - \bx_{i, k}$ is the difference of feature vectors of items $j$ and $k$ in list $i$. Second, we collect human feedback for $n$ rounds. At round $t \in [n]$, we sample a list $I_t \sim \pi_*$ and then observe $\sigma_t$ drawn from the PL model, as defined in \eqref{eq:ranking feedback}. Third, we estimate the model parameter as
\begin{align}
  \wtheta_n
  = \argmin_{\btheta \in \bTheta} \ell_n(\btheta)\,, \quad
  \ell_n(\btheta) = - \frac{1}{n} \sum_{t = 1}^n \sum_{k = 1}^K
  \log \left(\frac{\exp[\bx_{I_t, \sigma_t(k)}^\top \btheta]}
  {\sum_{j = k}^K \exp[\bx_{I_t, \sigma_t(j)}^\top \btheta]}\right)\,,
  \label{eq:mle ranking}
\end{align}
where $\bTheta$ is defined in \cref{ass:bounded}. We solve this estimation problem using \emph{iteratively reweighted least squares (IRLS)} \citep{wolke88iteratively}, a popular method for fitting the parameters of \emph{generalized linear models (GLMs)}. Finally, we sort the items in all lists $i$ according to their estimated mean rewards $\bx_{i, k}^\top \wtheta_n$ in descending order, to obtain the permutation $\hat{\sigma}_{n, i}$. The pseudo-code of \dope\ is in \cref{alg:dope ranking}.

The optimal design for \eqref{eq:mle ranking} is derived as follows. First, we derive the Hessian of $\ell_n(\btheta)$, $\nabla^2 \ell_n(\btheta)$, in \cref{lem:norm ranking}. The optimal design with $\nabla^2 \ell_n(\btheta)$ cannot be solved exactly because $\nabla^2 \ell_n(\btheta)$ depends on an unknown model parameter $\btheta$. To get around this, we bound $\btheta$-dependent terms from below. Many prior works on decision making under uncertainty with GLMs \citep{filippi10parametric,li17provably,zhu23principled,das24active,zhan24provable} took a similar approach. We derive normalized and unnormalized covariance matrices
\begin{align}
  \bSigma_n
  = \frac{2}{K (K - 1) n} \oSigma_n\,, \quad
  \oSigma_n
  = \sum_{t = 1}^n \sum_{j = 1}^K \sum_{k = j + 1}^K
  \bz_{I_t, j, k} \bz_{I_t, j, k}^\top\,,
  \label{eq:covariance ranking}
\end{align}
and prove that $\nabla^2 \ell_n(\btheta) \succeq \gamma \bSigma_n$ for some $\gamma > 0$. Therefore, we can maximize $\log\det(\nabla^2 \ell_n(\btheta))$, for any $\btheta \in \bTheta$, by maximizing $\log\det(\bSigma_n)$. The matrix for list $i$, $\bA_i$, can be related to the inner sum in \eqref{eq:covariance ranking} through $\Tr(\bA_i \bA_i^\top) = \sum_{j = 1}^K \sum_{k = j + 1}^K \bz_{i, j, k} \bz_{i, j, k}^\top$.

The price to pay for our approximation is a constant $C > 0$ in our bounds (\cref{thm:prediction error ranking,thm:ranking loss ranking}). In \cref{sec:optimal design ranking}, we discuss a more adaptive design and also compare to it empirically. We conclude that it would be harder to implement and analyze, and we do not observe empirical benefits at $K = 2$.

\subsection{Maximum Prediction Error Under Ranking Feedback}
\label{sec:prediction error ranking}

In this section, we bound the maximum prediction error of \dope\ under ranking feedback. Similarly to the proof of \cref{thm:prediction error absolute}, we decompose the error into two parts, which capture the efficiency of the optimal design and the uncertainty in the MLE $\wtheta_n$.

\begin{theorem}[Maximum prediction error]
\label{thm:prediction error ranking} With probability at least $1 - \delta$, the maximum prediction error after $n$ rounds is
\begin{align*}
  \max_{i \in [L]} \Tr(\bA_i^\top (\wtheta_n - \btheta_*)
  (\wtheta_n - \btheta_*)^\top \bA_i)
  = O\left(\frac{K^6 (d^2 + d \log(1 / \delta))}{n}\right)\,.
\end{align*}
\end{theorem}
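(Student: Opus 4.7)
The plan is to mirror the proof of \cref{thm:prediction error absolute}, substituting the PL MLE concentration of \citet{zhu23principled} for the least-squares concentration, and paying a $K$-dependent price for lower-bounding the Hessian $\nabla^2 \ell_n(\btheta)$ by a multiple of $\bSigma_n$.

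For the geometric step, I would use the PSD inequality $\bA_i \bA_i^\top \preceq \Tr(\bA_i^\top \oSigma_n^{-1} \bA_i) \cdot \oSigma_n$ (equivalently, Cauchy-Schwarz applied column by column to $\bA_i$) to obtain
\begin{align*}
  \max_{i \in [L]} \Tr(\bA_i^\top (\wtheta_n - \btheta_*)(\wtheta_n - \btheta_*)^\top \bA_i)
  \leq \Big(\max_{i \in [L]} \Tr(\bA_i^\top \oSigma_n^{-1} \bA_i)\Big) \|\wtheta_n - \btheta_*\|^2_{\oSigma_n}.
\end{align*}
The first factor is bounded by $d/n$ via the ranking analogue of \cref{lem:optimal design}: since the columns of $\bA_i$ are now the pairwise differences $\bz_{i,j,k}$ and $\oSigma_n = n \bV_{\pi_*}$ on an integer allocation, the matrix Kiefer-Wolfowitz theorem (\cref{thm:matrix kiefer-wolfowitz}) applied to these new $\bA_i$ yields $\max_i \Tr(\bA_i^\top \oSigma_n^{-1} \bA_i) = d/n$, exactly as in \cref{lem:optimal design}.

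For the statistical step, I would invoke the PSD bound $\nabla^2 \ell_n(\btheta) \succeq \gamma \bSigma_n$ flagged in \cref{sec:dope ranking}, where $\gamma$ is an inverse polynomial in $K$ coming from lower-bounding the derivatives of the PL softmax on the bounded parameter set $\bTheta$ of \cref{ass:bounded}. The PL MLE concentration inequality of \citet{zhu23principled} (Lemma 3.1 therein) then yields $\|\wtheta_n - \btheta_*\|^2_{\bSigma_n} = O((d + \log(1/\delta))/(\gamma^2 n))$ with probability at least $1 - \delta$. Converting from the normalized $\bSigma_n$ to the unnormalized $\oSigma_n = \tfrac{K(K-1) n}{2} \bSigma_n$ introduces another explicit factor of $K^2$, and after pushing through the $K$-dependence of $\gamma^{-2}$ the bound becomes $\|\wtheta_n - \btheta_*\|^2_{\oSigma_n} = O(K^6 (d + \log(1/\delta)))$. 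Multiplying by $d/n$ produces the claimed rate.

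The main obstacle is tracking the $K$ factors cleanly across three distinct sources: the rescaling $\bSigma_n \leftrightarrow \oSigma_n$ contributes an explicit $K^2$, the inversion of the Hessian PSD bound contributes $\gamma^{-2}$, and Zhu et al.'s concentration in the $K$-way PL setting carries additional $K$-dependence from the $K(K-1)/2$ pairwise comparisons implicit in a single PL observation. A secondary subtlety is that their result is typically stated with an added Tikhonov regularizer $\lambda I$; under the integer allocation $\oSigma_n$ is deterministic and the optimal design is full-rank by \cref{thm:matrix kiefer-wolfowitz}, so $\lambda$ can be sent to zero at the cost of a mild constant and the concentration carries over to our design-based i.i.d.\ allocation $I_t \sim \pi_*$.
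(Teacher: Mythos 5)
Your proposal is correct and follows essentially the same route as the paper: the Cauchy--Schwarz decomposition into a design term (equal to $d/n$ by \cref{lem:optimal design} applied to the pairwise-difference matrices $\bA_i$) times the statistical term $\tfrac{K(K-1)n}{2}\|\wtheta_n-\btheta_*\|_{\bSigma_n}^2$, with the latter controlled by the Hessian lower bound $\nabla^2\ell_n(\btheta)\succeq\gamma\bSigma_n$ and the concentration result of \citet{zhu23principled} (\cref{lem:norm ranking}). The only cosmetic difference is bookkeeping: in the paper's normalization $\gamma$ is an absolute constant and the $K^4$ sits entirely inside \cref{lem:norm ranking}, whereas you distribute the $K$-dependence between $\gamma^{-2}$ and the gradient bound, arriving at the same $K^6$ total.
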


This theorem is proved in \cref{sec:prediction error ranking proof}. We build on a self-normalizing bound of \citet{zhu23principled}, $\|\wtheta_n - \btheta_*\|_{\bSigma_n}^2 \leq O\left(\frac{K^4 (d + \log(1 / \delta))}{n}\right)$, which may not be tight in $K$. If the bound could be improved by a multiplicative $c > 0$, we would get a multiplicative $c$ improvement in \cref{thm:prediction error ranking}. Note that if the allocations $n \pi_*(i)$ are not integers, a rounding procedure is necessary \citep{pukelsheim06optimal,fiez2019sequential,katz2020empirical}. This would result in an additional multiplicative $1 + \beta$ in our bound, for some $\beta > 0$. We omit this factor in our derivations to simplify them.

\subsection{Ranking Loss Under Ranking Feedback}
\label{sec:ranking loss ranking}

In this section, we bound the expected ranking loss under ranking feedback. Similarly to \cref{sec:ranking loss absolute}, we define the \emph{gap} between the mean rewards of items $j$ and $k$ in list $i$ as $\Delta_{i, j, k} = \bz_{i, j, k}^\top \btheta_*$, where $\bz_{i, j, k} = \bx_{i, j} - \bx_{i, k}$ is the difference of feature vectors of items $j$ and $k$ in list $i$.

\begin{theorem}[Ranking loss]
\label{thm:ranking loss ranking} The expected ranking loss after $n$ rounds is bounded as
\begin{align*}
  \E[\mathrm{R}_n]
  \leq \sum_{i = 1}^L \sum_{j = 1}^K \sum_{k = j + 1}^K
  \exp\left[- \frac{\Delta_{i, j, k}^2 n}{C K^4 d} + d\right]\,,
\end{align*}
where $C > 0$ is a constant.
\end{theorem}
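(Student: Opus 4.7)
The plan is to mirror the proof of \cref{thm:ranking loss absolute}, replacing the Gaussian tail bound for OLS by the self-normalizing concentration for the Plackett-Luce MLE that already underpins \cref{thm:prediction error ranking}. First, expanding the ranking loss by linearity of expectation,
\begin{align*}
\E[\mathrm{R}_n]
= \sum_{i=1}^L \sum_{j=1}^K \sum_{k=j+1}^K
\prob{\bx_{i,j}^\top \wtheta_n < \bx_{i,k}^\top \wtheta_n},
\end{align*}
and using that $\bz_{i,j,k}^\top \btheta_* = \Delta_{i,j,k} > 0$ by the optimality convention, each misranking event becomes $\bz_{i,j,k}^\top (\btheta_* - \wtheta_n) > \Delta_{i,j,k}$. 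This reduces the problem to controlling one scalar linear functional of $\wtheta_n - \btheta_*$ per triple $(i,j,k)$.

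The next step is Cauchy--Schwarz in the $\bSigma_n$-norm,
\begin{align*}
(\bz_{i,j,k}^\top (\wtheta_n - \btheta_*))^2
\leq \|\bz_{i,j,k}\|_{\bSigma_n^{-1}}^2 \, \|\wtheta_n - \btheta_*\|_{\bSigma_n}^2,
\end{align*}
after which I bound each factor separately. The first factor is deterministic and comes from the design: \cref{thm:matrix kiefer-wolfowitz} applied to the matrices $\bA_i = [\bz_{i,j,k}]_{(j,k) \in \Pi_2(K)}$ gives $\sum_{(j,k) \in \Pi_2(K)} \|\bz_{i,j,k}\|_{\oSigma_n^{-1}}^2 \leq d/n$, and the normalization $\bSigma_n = \tfrac{2}{K(K-1)n}\oSigma_n$ transfers this to an $O(K^2 d)$ bound on $\|\bz_{i,j,k}\|_{\bSigma_n^{-1}}^2$. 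The second, stochastic, factor is exactly the Plackett--Luce self-normalizing bound of \citet{zhu23principled}: $\|\wtheta_n - \btheta_*\|_{\bSigma_n}^2 \leq O(K^4 (d + \log(1/\delta))/n)$ with probability at least $1 - \delta$.

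Putting the two bounds together, the misranking event can occur only if $\|\wtheta_n - \btheta_*\|_{\bSigma_n}^2 > \Delta_{i,j,k}^2/\|\bz_{i,j,k}\|_{\bSigma_n^{-1}}^2$. Inverting the Zhu tail by choosing $\log(1/\delta)$ so that the concentration threshold equals this level, and absorbing the remaining $K$-factors from the design step into the generic constant $C$, yields a per-pair probability bound of the form $\exp[-\Delta_{i,j,k}^2 n/(C K^4 d) + d]$. Summing this over $(i, j, k)$ closes the argument.

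The main obstacle is the stochastic factor. Unlike the absolute case, where OLS residuals are sub-Gaussian and can be handled directly by a scalar Chernoff inequality, the PL negative log-likelihood is only locally strongly convex, so concentration of $\wtheta_n$ requires the Hessian lower bound $\nabla^2 \ell_n(\btheta) \succeq \gamma \bSigma_n$ together with a self-normalizing martingale argument. Fortunately this is already packaged in the bound borrowed from \citet{zhu23principled} and can be invoked as a black box; the remainder of the proof is bookkeeping---loss decomposition, one Cauchy--Schwarz, one application of \cref{thm:matrix kiefer-wolfowitz}, and careful tracking of the powers of $K$ through the normalization switch between $\bSigma_n$ and $\oSigma_n$.
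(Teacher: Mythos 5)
Your proposal follows essentially the same route as the paper's proof: the same per-pair decomposition of $\E[\mathrm{R}_n]$, the same rewriting of the misranking event as $\bz_{i,j,k}^\top(\btheta_* - \wtheta_n) > \Delta_{i,j,k}$, the same Cauchy--Schwarz split, the same design bound via \cref{lem:optimal design}, and the same inversion of the self-normalized tail of \citet{zhu23principled}. The only divergence is bookkeeping of the $K(K-1)/2$ normalization between $\bSigma_n$ and $\oSigma_n$ --- you carry an $O(K^2 d)$ factor where the paper simply writes $\|\bz_{i,j,k}\|_{\bSigma_n^{-1}}^2 = n\|\bz_{i,j,k}\|_{\oSigma_n^{-1}}^2 \le d$ --- which only affects the power of $K$ swept into the constant $C$ and not the structure of the argument.
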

\begin{proof}
The proof is similar to \cref{thm:ranking loss absolute}. At the end of round $n$, we bound the probability that a sub-optimal item $k$ is ranked above item $j$. The proof has two parts. First, for any list $i \in [L]$ and items $(j, k) \in \Pi_2(K)$, we show that $\prob{\bx_{i, j}^\top \wtheta_n < \bx_{i, k}^\top \wtheta_n} = \prob{\bz_{i, j, k}^\top (\btheta_* - \wtheta_n) > \Delta_{i, j, k}}$. Then we bound this quantity by $\exp\left[- \frac{\Delta_{i, j, k}^2 n}{C K^4 d} + d\right]$. The full proof is in \cref{sec:ranking loss ranking proof}.
\end{proof}

The bound in \cref{thm:ranking loss ranking} is similar to that in \cref{thm:ranking loss absolute}, with the exception of multiplicative $K^{-4}$ and additive $d$. The leading term inside the sum can be bounded by $\exp\left[- \frac{\Delta_{\min}^2 n}{C K^4 d}\right]$, where $n$ is the sample size, $d$ is the number of features, and $\Delta_{\min}$ is the minimum gap. Therefore, similarly to \cref{thm:ranking loss absolute}, the bound decreases exponentially with budget $n$ and gaps, and increases with $d$. This dependence is similar to Theorem 2 of \citet{azizi22fixedbudget} for fixed-budget best-arm identification in GLMs. Our bound does not involve the extra factor of $\kappa > 0$ because we assume that all vectors lie in a unit ball (\cref{ass:bounded}).

\section{Experiments}
\label{sec:experiments}

The goal of our experiments is to evaluate \dope\ empirically and compare it to baselines. All methods estimate $\wtheta_n$ using \eqref{eq:mle absolute} or \eqref{eq:mle ranking}, depending on the feedback. To guarantee that these problems are well defined, even if the sample covariance matrix $\oSigma_n$ is not full rank, we regularize both objectives with $\gamma \|\btheta\|_2^2$, for a small $\gamma > 0$. This mostly impacts small sample sizes. Specifically, since the optimal design collects diverse feature vectors, $\oSigma_n$ is likely to be full rank for large sample sizes. After $\wtheta_n$ is estimated, each method ranks items in all lists based on their estimated mean rewards $\bx_{i, k}^\top \wtheta_n$. The performance of all methods is measured by their ranking loss in \eqref{eq:ranking loss} divided by $L$. All experiments are averaged over $100$ independent runs, and we report results in \cref{fig:experiments}. We compare the following algorithms:

\begin{figure}[t]
  \centering
  \begin{tabular}{cc}
    \subfigure[Absolute feedback]{\includegraphics[scale=0.4]{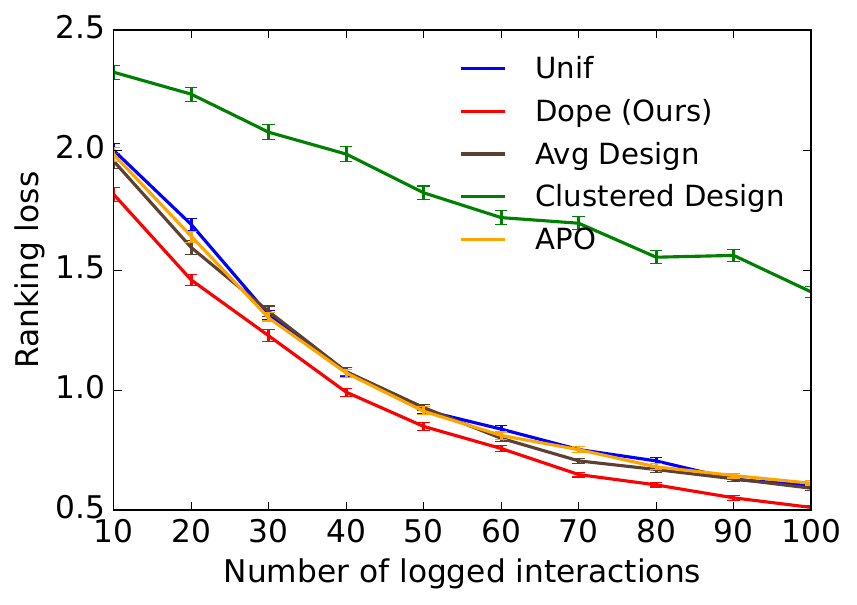} \label{fig:absolute}} &
    \subfigure[Ranking feedback]{\includegraphics[scale=0.4]{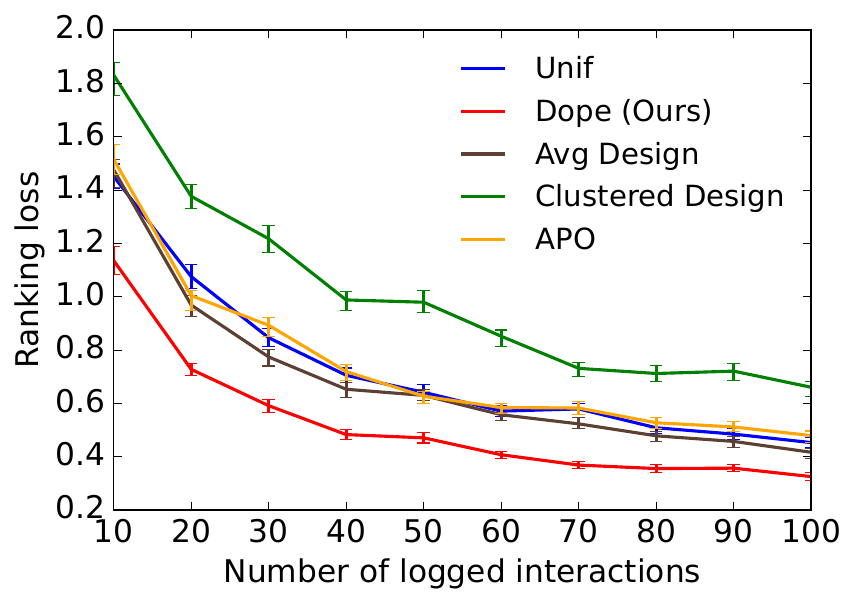} \label{fig:ranking}} \\
    \subfigure[Nectar dataset]{\includegraphics[scale=0.4]{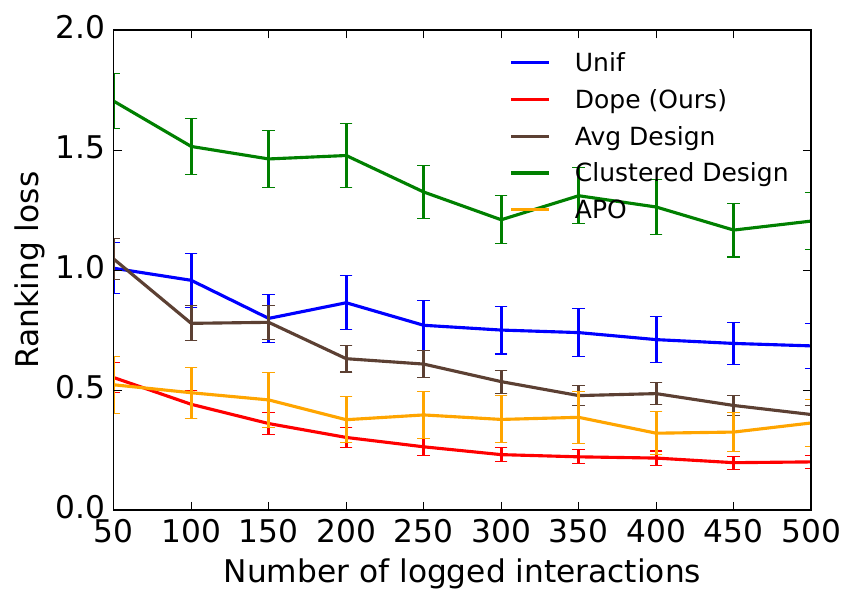} \label{fig:nectar}} &
    \subfigure[Anthropic dataset]{\includegraphics[scale=0.4]{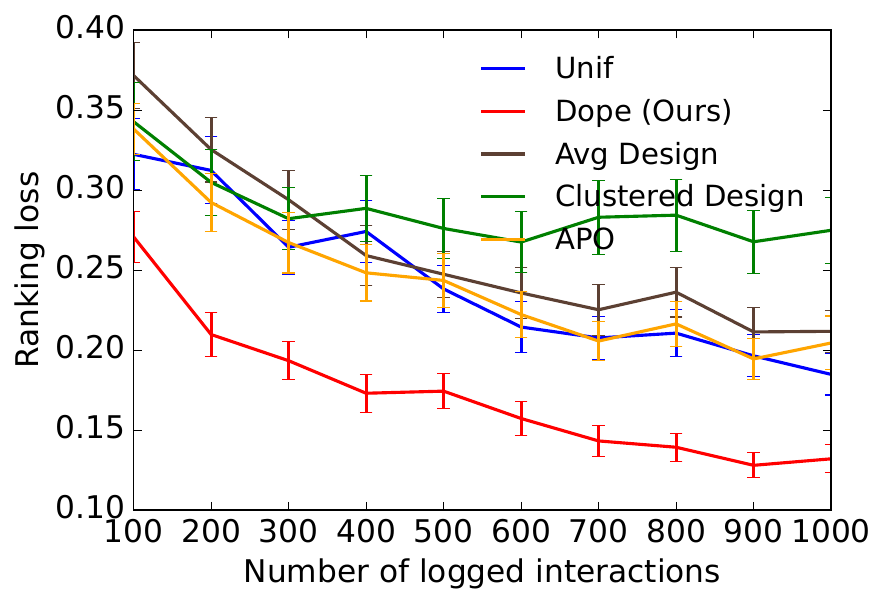} \label{fig:anthropic}}
  \end{tabular}
  \vspace{-0.1in}
  \caption{Ranking loss of all compared methods as a function of the number of rounds. The error bars are one standard error of the estimates.}
  \label{fig:experiments}
\end{figure}

\textbf{(1)} \dope: This is our method. We solve the optimal design problem in \eqref{eq:optimal design} and then sample lists $I_t$ according to $\pi_*$.

\textbf{(2)} \unif: This baseline chooses lists $I_t$ uniformly at random from $[L]$. While simple, it is known to be competitive in real-world problems where feature vectors may cover the feature space close to uniformly \citep{ash2019deep,yuan2020cold,ash2021gone,ren2021survey}.

\textbf{(3)} \avg: The exploration policy is an optimal design over feature vectors. The feature vector of list $i$ is the mean of the feature vectors of all items in it, $\bar{\bx}_i = \frac{1}{K} \sum_{k = 1}^K \bx_{i, k}$. After the design is computed, we sample lists $I_t$ according to it. The rest is the same as in \dope. This baseline shows that our list representation with multiple feature vectors can outperform more naive choices.

\textbf{(4)} \cds: This approach uses the same representation as \avg. The difference is that we cluster the lists using $k$-medoids. Then we sample lists $I_t$ uniformly at random from the cluster centroids. The rest is the same as in \avg. This baseline shows that \dope\ outperforms other notions of diversity, such as obtained by clustering. We tune $k$ ($k = 10$ in the Nectar dataset and $k = 6$ otherwise) and report only the best results.

\textbf{(5)} \apo: This method was proposed in \citet{das24active} and is the closest related work. \apo\ greedily minimizes the maximum error in pairwise ranking of $L$ lists of length $K = 2$. We extend it to $K > 2$ as follows. First, we turn $L$ lists of length $K$ into $\binom{K}{2} L$ lists of length $2$, one for each pair of items in the original lists. Then we apply \apo\ to these $\binom{K}{2} L$ lists of length $2$.

Pure exploration algorithms are often compared to cumulative regret baselines \citep{bubeck09pure,audibert10best}. Since our problem is a form of learning to rank, \emph{online learning to rank (OLTR)} baselines \citep{radlinski08learning,kveton15cascading,zong16cascading} seem natural. We do not compare to them for the following reason. The problem of an optimal design over lists is to design a distribution over queries. All OLTR algorithms solve a different problem, return a ranked list of items conditioned on a query chosen by the environment. Since they do not choose the queries, they cannot solve our problem.

\textbf{Synthetic experiment 1 (absolute feedback):} We have $L = 400$ questions and represent them by random vectors $\bq_i \in [-1, 1]^6$. Each question has $K = 4$ answers. For each question, we generate $K$ random answers $\ba_{i, k} \in [-1, 1]^6$. Both the question and answer vectors are normalized to unit length. For each question-answer pair $(i, k)$, the feature vector is $\bx_{i, k} = \mathrm{vec}(\bq_i \ba_{i, k}^\top)$ and has length $d = 36$. The outer product captures cross-interaction terms of the question and answer representations. A similar technique has been used for feature preprocessing of the Yahoo! Front Page Today Module User Click Log Dataset \citep{li2010contextual,li2011unbiased,zhu2021pure,baek2023ts}. We choose a random $\btheta_* \in [0, 1]^d$. The absolute feedback is generated as in \eqref{eq:absolute feedback}. Our results are reported in \cref{fig:absolute}. We note that the ranking loss of \dope\ decreases the fastest among all methods, with \unif, \avg, and \apo\ being close second.

\textbf{Synthetic experiment 2 (ranking feedback):} This experiment is similar to the first experiment, except that the feedback is generated by the PL model in \eqref{eq:ranking feedback}. Our results are reported in \cref{fig:ranking} and we observe again that the ranking loss of \dope\ decreases the fastest. The closest baselines are \unif, \avg, and \apo. Their lowest ranking loss ($n = 100$) is attained by \dope\ at $n = 60$, which is nearly a two-fold reduction in sample size. In \cref{sec:ablation studies}, we conduct additional studies on this problem. We vary the number of lists $L$ and items $K$, and report the computation time and ranking loss.

\textbf{Experiment 3 (Nectar dataset):} The Nectar dataset \citep{starling2023} is a dataset of $183$k questions, each with $7$ answers. We take a subset of this dataset: $L = 2\,000$ questions and $K = 5$ answers. The answers are generated by GPT-4, GPT-4-0613, GPT-3.5-turbo, GPT-3.5-turbo-instruct, and Anthropic models. We embed the questions and answers in $768$ dimensions using Instructor embeddings \citep{INSTRUCTOR}. Then we project them to $\R^{10}$ using a random projection matrix. The feature vector of answer $k$ to question $i$ is $\bx_{i, k} = \mathrm{vec}(\bq_i \ba_{i, k}^\top)$, where $\bq_i$ and $\ba_{i, k}$ are the projected embeddings of question $i$ and answer $k$, respectively. Hence $d = 100$. The ranking feedback is simulated using the PL model in \eqref{eq:ranking feedback}. We estimate its parameter $\btheta_* \in \R^d$ from the ranking feedback in the dataset using the MLE in \eqref{eq:mle ranking}. Our results are reported in \cref{fig:nectar}. We observe that the ranking loss of \dope\ is the lowest. The closest baseline is \apo. Its lowest ranking loss ($n = 500$) is attained by \dope\ at $n = 150$, which is more than a three-fold reduction in sample size.

\textbf{Experiment 4 (Anthropic dataset):} The Anthropic dataset \citep{bai2022training} is a dataset of $161$k questions with two answers per question. We take a subset of $L = 2\,000$ questions. We embed the questions and answers in $768$ dimensions using Instructor embeddings \citep{INSTRUCTOR}. Then we project them to $\R^6$ using a random projection matrix. The feature vector of answer $k$ to question $i$ is $\bx_{i, k} = \mathrm{vec}(\bq_i \ba_{i, k}^\top)$, where $\bq_i$ and $\ba_{i, k}$ are the projected embeddings of question $i$ and answer $k$, respectively. Hence $d = 36$. The ranking feedback is simulated using the PL model in \eqref{eq:ranking feedback}. We estimate its parameter $\btheta_* \in \R^d$ from the feedback in the dataset using the MLE in \eqref{eq:mle ranking}. Our results are reported in \cref{fig:anthropic}. We note again that the ranking loss of \dope\ is the lowest. The closest baselines are \unif, \avg, and \apo. Their lowest ranking loss ($n = 1\,000$) is attained by \dope\ at $n = 300$, which is more than a three-fold reduction in sample size.

\section{Conclusions}
\label{sec:conclusions}

We study the problem of optimal human preference elicitation for learning preference models. The problem is formalized as learning to rank $K$ answers to $L$ questions under a budget on the number of asked questions. We consider two feedback models: absolute and ranking. The absolute feedback is motivated by how humans assign relevance judgments in search \citep{hofmann13fidelity,ms-marco}. The ranking feedback is motivated by learning reward models in RLHF \citep{kaufmann2024survey,rafailov2023direct,kang2023reward,casper2023open,shen2023large,chen2023active}. We address both settings in a unified way. The key idea in our work is to generalize optimal designs \citep{kiefer60equivalence,lattimore19bandit}, a methodology for computing optimal information-gathering policies, to ranked lists. After the human feedback is collected, we learn preference models using existing estimators. Our method is statistically efficient, computationally efficient, and can be analyzed. We bound its prediction errors and ranking losses, in both absolute and ranking feedback models, and evaluate it empirically to show that it is practical.

Our work can be extended in several directions. First, we study only two models of human feedback: absolute and ranking. However, many feedback models exist \citep{jeon20rewardrational}. One common property of these models is that learning of human preferences can be formulated as likelihood maximization. In such cases, an optimal design exists and can be used for human preference elicitation, exactly as in our work. Second, while we bound the prediction errors and ranking losses of \dope, we do not derive matching lower bounds. Therefore, although we believe that \dope\ is near optimal, we do not prove it. Third, we want to extend our methodology to the fixed-confidence setting. Finally, we want to apply our approach to learning reward models in LLMs and evaluate it.

\bibliographystyle{plainnat}
\bibliography{biblio,brano}

\clearpage
\onecolumn
\appendix

\section{Proofs}
\label{sec:proofs}

This section contains proofs of our main claims.

\subsection{Proof of \cref{thm:matrix kiefer-wolfowitz}}
\label{sec:matrix kiefer-wolfowitz proof}

We follow the sketch of the proof in Section 21.1 of \citet{lattimore19bandit} and adapt it to matrices. Before we start, we prove several helpful claims.

First, using (43) in \citet{petersen12matrix}, we have
\begin{align*}
  \frac{\partial}{\partial \pi(i)} f(\pi)
  = \frac{\partial}{\partial \pi(i)} \log\det(\bV_\pi)
  = \Tr\left(\bV_\pi^{-1} \frac{\partial}{\partial \pi(i)} \bV_\pi\right)
  = \Tr(\bV_\pi^{-1} \bA_i \bA_i^\top)
  = \Tr(\bA_i^\top \bV_\pi^{-1} \bA_i)\,.
\end{align*}
In the last step, we use the cyclic property of the trace. We define the gradient of $f(\pi)$ with respect to $\pi$ as $\nabla f(\pi) = (\Tr(\bA_i^\top \bV_\pi^{-1} \bA_i))_{i = 1}^L$. Second, using basic properties of the trace, we have
\begin{align}
  \sum_{i = 1}^L \pi(i) \Tr(\bA_i^\top \bV_\pi^{-1} \bA_i)
  & = \sum_{i = 1}^L \pi(i) \Tr(\bV_\pi^{-1} \bA_i \bA_i^\top)
  = \Tr\left(\sum_{i = 1}^L \pi(i) \bV_\pi^{-1} \bA_i \bA_i^\top\right)
  \label{eq:trace identity} \\
  & = \Tr\left(\bV_\pi^{-1} \sum_{i = 1}^L \pi(i) \bA_i \bA_i^\top\right)
  = \Tr(I_d)
  = d\,.
  \nonumber
\end{align}
Finally, for any distribution $\pi$, \eqref{eq:trace identity} implies
\begin{align}
  g(\pi)
  = \max_{i \in [L]} \Tr(\bA_i^\top \bV_\pi^{-1} \bA_i)
  \geq \sum_{i = 1}^L \pi(i) \Tr(\bA_i^\top \bV_\pi^{-1} \bA_i)
  = d\,.
  \label{eq:g lower bound}
\end{align}
Now we are ready to start the proof.

$(b) \Rightarrow (a)$: Let $\pi_*$ be a maximizer of $f(\pi)$. By first-order optimality conditions, for any distribution $\pi$, we have
\begin{align*}
  0
  & \geq \langle\nabla f(\pi_*), \pi - \pi_*\rangle
  = \sum_{i = 1}^L \pi(i) \Tr(\bA_i^\top \bV_{\pi_*}^{-1} \bA_i)
  - \sum_{i = 1}^L \pi_*(i) \Tr(\bA_i^\top \bV_{\pi_*}^{-1} \bA_i) \\
  & = \sum_{i = 1}^L \pi(i) \Tr(\bA_i^\top \bV_{\pi_*}^{-1} \bA_i) - d\,.
\end{align*}
In the last step, we use \eqref{eq:trace identity}. Since this inequality holds for any distribution $\pi$, including Dirac at $i$ for any $i \in [L]$, we have $d \geq \max_{i \in [L]} \Tr(\bA_i^\top \bV_{\pi_*}^{-1} \bA_i) = g(\pi_*)$. Finally, by \eqref{eq:g lower bound}, $g(\pi) \geq d$ holds for any distribution $\pi$. Therefore, $\pi_*$ must be a minimizer of $g(\pi)$ and $g(\pi_*) = d$.

$(c) \Rightarrow (b)$: Note that
\begin{align*}
  \langle\nabla f(\pi_*), \pi - \pi_*\rangle
  = \sum_{i = 1}^L \pi(i) \Tr(\bA_i^\top \bV_{\pi_*}^{-1} \bA_i) - d
  \leq \max_{i \in [L]} \Tr(\bA_i^\top \bV_{\pi_*}^{-1} \bA_i) - d
  = g(\pi_*) - d
\end{align*}
holds for any distributions $\pi$ and $\pi_*$. Since $g(\pi_*) = d$, we have $\langle\nabla f(\pi_*), \pi - \pi_*\rangle \leq 0$. Therefore, by first-order optimality conditions, $\pi_*$ is a maximizer of $f(\pi)$.

$(a) \Rightarrow (c)$: This follows from the same argument as in $(b) \Rightarrow (a)$. In particular, any maximizer $\pi_*$ of $f(\pi)$ is a minimizer of $g(\pi)$, and $g(\pi_*) = d$.

To prove that $|\support{\pi_*}| \leq d (d + 1) / 2$, we argue that $\pi_*$ can be substituted for a distribution with a lower support whenever $|\support{\pi_*}| > d (d + 1) / 2$. The claim then follows by induction.

Let $S = \support{\pi_*}$ and suppose that $|S| > d (d + 1) / 2$. We start with designing a suitable family of optimal solutions. Since the space of $d \times d$ symmetric matrices has $d (d + 1) / 2$ dimensions, there must exist an $L$-dimensional vector $\eta$ such that $\support{\eta} \subseteq S$ and
\begin{align}
  \sum_{i \in S} \eta(i) \bA_i \bA_i^\top
  = \mathbf{0}_{d, d}\,,
  \label{eq:zero matrix}
\end{align}
where $\mathbf{0}_{d, d}$ is a $d \times d$ zero matrix. Let $\pi_t = \pi_* + t \eta$ for $t \geq 0$. An important property of $\pi_t$ is that
\begin{align*}
  \log\det(\bV_{\pi_t})
  = \log\det\left(\bV_{\pi_*} + t \sum_{i \in S} \eta(i) \bA_i \bA_i^\top\right)
  = \log\det(\bV_{\pi_*})\,.
\end{align*}
Therefore, any $\pi_t$ is an optimal solution. However, it may not be a distribution.

We prove that $\pi_t \in \Delta^L$, for some $t > 0$, as follows. First, note that $\Tr(\bA_i^\top \bV_{\pi_*}^{-1} \bA_i) = d$ holds for all $i \in S$. Otherwise $\pi_*$ could be improved. Using this observation, we have
\begin{align*}
  d \sum_{i \in S} \eta(i)
  & = \sum_{i \in S} \eta(i) \Tr(\bA_i^\top \bV_{\pi_*}^{-1} \bA_i)
  = \sum_{i \in S} \eta(i) \Tr(\bV_{\pi_*}^{-1} \bA_i \bA_i^\top) \\
  & = \Tr\left(\bV_{\pi_*}^{-1} \sum_{i \in S} \eta(i) \bA_i \bA_i^\top\right)
  = 0\,,
\end{align*}
where the last equality follows from \eqref{eq:zero matrix}. This implies that $\sum_{i \in S} \eta(i) = 0$ and that $\pi_t \in \Delta^L$, for as long as $\pi_t \geq \mathbf{0}_L$.

Finally, we take the largest feasible $t$, $\tau = \max \{t > 0: \pi_t \in \Delta^L\}$, and note that $\pi_\tau$ has at least one more non-zero entry than $\pi_*$ while having the same value. This concludes the proof.

\subsection{Proof of \cref{lem:optimal design}}
\label{sec:optimal design proof}

We note that for any list $i \in [L]$,
\begin{align*}
  \sum_{\ba \in \bA_i} \|\ba\|_{\oSigma_n^{-1}}^2
  & = \Tr(\bA_i^\top \oSigma_n^{-1} \bA_i)
  = \Tr\left(\bA_i^\top \left(\sum_{t = 1}^n \sum_{k = 1}^K
  \bx_{I_t, k} \bx_{I_t, k}^\top\right)^{-1} \bA_i\right) \\
  & = \frac{1}{n} \Tr\left(\bA_i^\top \left(\sum_{i = 1}^L \pi_*(i) \sum_{k = 1}^K
  \bx_{i, k} \bx_{i, k}^\top\right)^{-1} \bA_i\right)
  = \frac{1}{n} \Tr(\bA_i^\top \bV_{\pi_*}^{-1} \bA_i)\,.
\end{align*}
The third equality holds because all $n \pi_*(i)$ are integers and $n > 0$. In this case, the optimal design is exact and $\oSigma_n$ invertible, because all of its eigenvalues are positive. Now we use the definition of $g(\pi_*)$, apply \cref{thm:matrix kiefer-wolfowitz}, and get that
\begin{align*}
  \max_{i \in [L]} \Tr(\bA_i^\top \bV_{\pi_*}^{-1} \bA_i)
  = g(\pi_*)
  = d\,.
\end{align*}
This concludes the proof.

\subsection{Proof of \cref{thm:prediction error absolute}}
\label{sec:prediction error absolute proof}

For any list $i \in [L]$, we have
\begin{align*}
  \Tr(\bA_i^\top (\wtheta_n - \btheta_*) (\wtheta_n - \btheta_*)^\top \bA_i)
  & = \sum_{\ba \in \bA_i} (\ba^\top (\wtheta_n - \btheta_*))^2
  = \sum_{\ba \in \bA_i} (\ba^\top \oSigma_n^{-1 / 2}
  \oSigma_n^{1 / 2} (\wtheta_n - \btheta_*))^2 \\
  & \leq \sum_{\ba \in \bA_i} \|\ba\|_{\oSigma_n^{-1}}^2
  \|\wtheta_n - \btheta_*\|_{\oSigma_n}^2\,,
\end{align*}
where the last step follows from the Cauchy-Schwarz inequality. Therefore,
\begin{align*}
  \max_{i \in [L]} \Tr(\bA_i^\top (\wtheta_n - \btheta_*)
  (\wtheta_n - \btheta_*)^\top \bA_i)
  & \leq \max_{i \in [L]} \sum_{\ba \in \bA_i} \|\ba\|_{\oSigma_n^{-1}}^2
  \|\wtheta_n - \btheta_*\|_{\oSigma_n}^2 \\
  & = \underbrace{\max_{i \in [L]} \sum_{\ba \in \bA_i}
  \|\ba\|_{\oSigma_n^{-1}}^2}_{\textbf{Part I}}
  \underbrace{n \|\wtheta_n - \btheta_*\|_{\bSigma_n}^2}_{\textbf{Part II}}\,,
\end{align*}
where we use $\oSigma_n = n \bSigma_n$ in the last step.

Part I captures the efficiency of data collection and depends on the optimal design. By \cref{lem:optimal design},
\begin{align*}
  \max_{i \in [L]} \sum_{\ba \in \bA_i} \|\ba\|_{\oSigma_n^{-1}}^2
  = \frac{d}{n}\,.
\end{align*}
Part II measures how the estimated model parameter $\wtheta_n$ differs from the true model parameter $\btheta_*$, under the empirical covariance matrix $\bSigma_n$. To bound this term, we use \cref{lem:norm absolute} and get that
\begin{align*}
  \|\wtheta_n - \btheta_*\|_{\bSigma_n}^2
  \leq \frac{16 d + 8 \log(1 / \delta)}{n}
\end{align*}
holds with probability at least $1 - \delta$. The main claim follows from combining the upper bounds on Parts I and II.

\subsection{Proof of \cref{thm:ranking loss absolute}}
\label{sec:ranking loss absolute proof}

From the definition of ranking loss, we have 
\begin{align*}
  \E[\mathrm{R}_n]
  = \sum_{i = 1}^L \sum_{j = 1}^K \sum_{k = j + 1}^K
  \E[\I{\hat{\sigma}_{n, i}(j) > \hat{\sigma}_{n, i}(k)}]
  = \sum_{i = 1}^L \sum_{j = 1}^K \sum_{k = j + 1}^K
  \prob{\bx_{i, j}^\top \wtheta_n < \bx_{i, k}^\top \wtheta_n}\,.
\end{align*}
In the rest of the proof, we bound each term separately. Specifically, for any list $i \in [L]$ and items $(j, k) \in \Pi_2(K)$ in it, we have
\begin{align*}
  \prob{\bx_{i, j}^\top \wtheta_n < \bx_{i, k}^\top \wtheta_n}
  & = \prob{\bx_{i, k}^\top \wtheta_n - \bx_{i, j}^\top \wtheta_n > 0} \\
  & = \prob{\bx_{i, k}^\top \wtheta_n - \bx_{i, j}^\top \wtheta_n +
  \Delta_{i, j, k} > \Delta_{i, j, k}} \\
  & = \prob{\bx_{i, k}^\top \wtheta_n - \bx_{i, j}^\top \wtheta_n +
  \bx_{i, j}^\top \btheta_* - \bx_{i, k}^\top \btheta_* > \Delta_{i, j, k}} \\
  & = \prob{\bx_{i, k}^\top (\wtheta_n - \btheta_*) +
  \bx_{i, j}^\top (\btheta_* - \wtheta_n) > \Delta_{i, j, k}} \\
  & \leq \prob{\bx_{i, k}^\top (\wtheta_n - \btheta_*) > \frac{\Delta_{i, j, k}}{2}} +
  \prob{\bx_{i, j}^\top (\btheta_* - \wtheta_n) > \frac{\Delta_{i, j, k}}{2}}\,.
\end{align*}
In the third equality, we use that $\Delta_{i, j, k} = (\bx_{i, j} - \bx_{i, k})^\top \btheta_*$. The last step follows from the fact that event $A + B > c$ occurs only if $A > c / 2$ or $B > c / 2$.

Now we bound $\prob{\bx_{i, k}^\top (\wtheta_n - \btheta_*) > \Delta_{i, j, k} / 2}$ and note that the other term can be bounded analogously. Specifically, we apply \cref{lem:concentration absolute,lem:optimal design}, and get
\begin{align*}
  \prob{\bx_{i, k}^\top (\wtheta_n - \btheta_*) > \frac{\Delta_{i, j, k}}{2}}
  \leq \exp\left[- \frac{\Delta_{i, j, k}^2}
  {8 \|\bx_{i, k}\|_{\oSigma_n^{-1}}^2}\right]
  \leq \exp\left[- \frac{\Delta_{i, j, k}^2 n}{8 d}\right]\,.
\end{align*}
This concludes the proof.

The above approach relies on the concentration of $\bx_{i, k}^\top (\wtheta_n - \btheta_*)$, which is proved in \cref{lem:concentration absolute}. A similar result can be obtained using the Cauchy-Schwarz inequality. This is especially useful when a high-probability bound on $\|\wtheta_n - \btheta_*\|_{\bSigma_n}^2$ already exists, such as in \cref{sec:ranking loss ranking proof}. Specifically, by the Cauchy-Schwarz inequality,
\begin{align*}
  \prob{\bx_{i, k}^\top (\wtheta_n - \btheta_*) > \frac{\Delta_{i, j, k}}{2}}
  & \leq \prob{\|\bx_{i, k}\|_{\bSigma_n^{-1}}
  \|\wtheta_n - \btheta_*\|_{\bSigma_n} > \frac{\Delta_{i, j, k}}{2}} \\
  & = \prob{\|\bx_{i, k}\|_{\bSigma_n^{-1}}^2
  \|\wtheta_n - \btheta_*\|_{\bSigma_n}^2 > \frac{\Delta_{i, j, k}^2}{4}} \\
  & \leq \prob{\|\wtheta_n - \btheta_*\|_{\bSigma_n}^2
  > \frac{\Delta_{i, j, k}^2}{4 d}}\,.
\end{align*}
In the second inequality, we use that $\|\bx_{i, k}\|_{\bSigma_n^{-1}}^2 = n \|\bx_{i, k}\|_{\oSigma_n^{-1}}^2 \leq d$, which follows from \cref{lem:optimal design}. Finally, \cref{lem:norm absolute} says that
\begin{align*}
  \prob{\|\wtheta_n - \btheta_*\|_{\bSigma_n}^2
  \geq \frac{16 d + 8 \log(1 / \delta)}{n}}
  \leq \delta
\end{align*}
holds for any $\delta > 0$. To apply this bound, we let
\begin{align*}
  \frac{16 d + 8 \log(1 / \delta)}{n}
  = \frac{\Delta_{i, j, k}^2}{4 d}
\end{align*}
and express $\delta$. This leads to
\begin{align*}
  \prob{\|\wtheta_n - \btheta_*\|_{\bSigma_n}^2
  > \frac{\Delta_{i, j, k}^2}{4 d}}
  \leq \delta
  = \exp\left[- \frac{\Delta_{i, j, k}^2 n}{32 d} + 2 d\right]\,,
\end{align*}
which concludes the alternative proof.

\subsection{Proof of \cref{thm:prediction error ranking}}
\label{sec:prediction error ranking proof}

Following the same steps as in \cref{sec:prediction error absolute proof}, we have
\begin{align*}
  \max_{i \in [L]} \Tr(\bA_i^\top (\wtheta_n - \btheta_*)
  (\wtheta_n - \btheta_*)^\top \bA_i)
  \leq \underbrace{\max_{i \in [L]} \sum_{\ba \in \bA_i}
  \|\ba\|_{\oSigma_n^{-1}}^2}_{\textbf{Part I}}
  \underbrace{\frac{K (K - 1) n}{2}
  \|\wtheta_n - \btheta_*\|_{\bSigma_n}^2}_{\textbf{Part II}}\,.
\end{align*}
Part I captures the efficiency of data collection and depends on the optimal design. By \cref{lem:optimal design},
\begin{align*}
  \max_{i \in [L]} \sum_{\ba \in \bA_i} \|\ba\|_{\oSigma_n^{-1}}^2
  = \frac{d}{n}\,.
\end{align*}
Part II measures how the estimated model parameter $\wtheta_n$ differs from the true model parameter $\btheta_*$, under the empirical covariance matrix $\bSigma_n$. To bound this term, we use \cref{lem:norm ranking} (a restatement of Theorem 4.1 in \citet{zhu23principled}) and get that
\begin{align*}
  \|\wtheta_n - \btheta_*\|_{\bSigma_n}^2
  \leq \frac{C K^4 (d + \log(1 / \delta))}{n}
\end{align*}
holds with probability at least $1 - \delta$, where $C > 0$ is some constant. The main claim follows from combining the upper bounds on Parts I and II.

\subsection{Proof of \cref{thm:ranking loss ranking}}
\label{sec:ranking loss ranking proof}

Following the same steps as in \cref{sec:ranking loss absolute proof}, we get
\begin{align*}
  \prob{\bx_{i, j}^\top \wtheta_n < \bx_{i, k}^\top \wtheta_n}
  & = \prob{\bx_{i, k}^\top (\wtheta_n - \btheta_*) +
  \bx_{i, j}^\top (\btheta_* - \wtheta_n) > \Delta_{i, j, k}} \\
  & = \prob{\bz_{i, j, k}^\top (\btheta_* - \wtheta_n) > \Delta_{i, j, k}} \\
  & \leq \prob{\|\bz_{i, j, k}\|_{\bSigma_n^{-1}}
  \|\wtheta_n - \btheta_*\|_{\bSigma_n} > \Delta_{i, j, k}} \\
  & = \prob{\|\bz_{i, j, k}\|_{\bSigma_n^{-1}}^2
  \|\wtheta_n - \btheta_*\|_{\bSigma_n}^2 > \Delta_{i, j, k}^2} \\
  & \leq \prob{\|\wtheta_n - \btheta_*\|_{\bSigma_n}^2
  > \frac{\Delta_{i, j, k}^2}{d}}\,.
\end{align*}
In the first inequality, we use the Cauchy-Schwarz inequality. In the second inequality, we use that $\|\bz_{i, j, k}\|_{\bSigma_n^{-1}}^2 = n \|\bz_{i, j, k}\|_{\oSigma_n^{-1}}^2 \leq d$, which follows from \cref{lem:optimal design}. Finally, \cref{lem:norm ranking} says that
\begin{align*}
  \prob{\|\wtheta_n - \btheta_*\|_{\bSigma_n}^2
  \geq \frac{C K^4 (d + \log(1 / \delta))}{n}}
  \leq \delta
\end{align*}
holds for any $\delta > 0$. To apply this bound, we let
\begin{align*}
  \frac{C K^4 (d + \log(1 / \delta))}{n}
  = \frac{\Delta_{i, j, k}^2}{d}
\end{align*}
and express $\delta$. This leads to
\begin{align*}
  \prob{\|\wtheta_n - \btheta_*\|_{\bSigma_n}^2
  > \frac{\Delta_{i, j, k}^2}{d}}
  \leq \delta
  = \exp\left[- \frac{\Delta_{i, j, k}^2 n}{C K^4 d} + d\right]\,,
\end{align*}
which concludes the proof.

\section{Supporting Lemmas}
\label{sec:supporting lemmas}

This section contains our supporting lemmas and their proofs.

\begin{lemma}
\label{lem:norm absolute} Consider the absolute feedback model in \cref{sec:absolute feedback}. Fix $\delta \in (0, 1)$. Then
\begin{align*}
  \|\wtheta_n - \btheta_*\|_{\bSigma_n}^2
  \leq \frac{16 d + 8 \log(1 / \delta)}{n}
\end{align*}
holds with probability at least $1 - \delta$.
\end{lemma}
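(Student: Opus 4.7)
The plan is to reduce the quadratic form $\|\wtheta_n - \btheta_*\|_{\bSigma_n}^2$ to a concentration bound on a quadratic form of sub-Gaussian noise under a low-rank projection. First, I would substitute $y_{t,k} = \bx_{I_t,k}^\top \btheta_* + \eta_{t,k}$ from \eqref{eq:absolute feedback} into the closed form for $\wtheta_n$ in \eqref{eq:mle absolute}. Since $\oSigma_n = \sum_{t,k} \bx_{I_t,k} \bx_{I_t,k}^\top$, the signal part gives back $\btheta_*$, leaving
\begin{align*}
  \wtheta_n - \btheta_*
  = \oSigma_n^{-1} S_n\,, \qquad
  S_n = \sum_{t = 1}^n \sum_{k = 1}^K \bx_{I_t, k} \eta_{t, k}\,.
\end{align*}
Using $\bSigma_n = \oSigma_n / n$, I then get $\|\wtheta_n - \btheta_*\|_{\bSigma_n}^2 = \frac{1}{n} S_n^\top \oSigma_n^{-1} S_n$, so the task reduces to bounding $S_n^\top \oSigma_n^{-1} S_n$ by $16 d + 8 \log(1 / \delta)$ with probability at least $1 - \delta$.

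Next, I would stack the $N = nK$ features as rows of a matrix $\bX \in \R^{N \times d}$ and the noise as $\bm{\eta} \in \R^N$, so that $\oSigma_n = \bX^\top \bX$ and $S_n = \bX^\top \bm{\eta}$. This lets me write
\begin{align*}
  S_n^\top \oSigma_n^{-1} S_n
  = \bm{\eta}^\top \bX (\bX^\top \bX)^{-1} \bX^\top \bm{\eta}
  = \bm{\eta}^\top \bP \bm{\eta}
  = \|\bP \bm{\eta}\|_2^2\,,
\end{align*}
where $\bP$ is the orthogonal projection onto the column space of $\bX$, with rank at most $d$, Frobenius norm $\|\bP\|_F^2 \leq d$, and operator norm $\|\bP\|_{\mathrm{op}} \leq 1$. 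Because the noise is non-adaptive here (the $I_t$ are drawn i.i.d.\ from $\pi_*$, independent of $\bm{\eta}$), conditioning on $\bX$ leaves the entries of $\bm{\eta}$ as independent zero-mean $1$-sub-Gaussian coordinates.

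The heart of the proof is then a concentration inequality for $\|\bP \bm{\eta}\|_2^2$. I would apply a standard Hanson-Wright-style bound for quadratic forms of sub-Gaussian vectors (or equivalently the Hsu-Kakade-Zhang chi-squared tail), which yields, for every $t > 0$,
\begin{align*}
  \prob{\bm{\eta}^\top \bP \bm{\eta} \geq d + 2\sqrt{d t} + 2 t}
  \leq e^{- t}\,.
\end{align*}
Setting $t = \log(1 / \delta)$ and using $2\sqrt{d t} \leq d + t$ and further loosening to absorb the additive drift into the stated constants gives $\bm{\eta}^\top \bP \bm{\eta} \leq 16 d + 8 \log(1 / \delta)$, from which the claim follows after dividing by $n$.

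The main obstacle is selecting and pinning down a clean form of the sub-Gaussian quadratic-form concentration that yields the exact constants $16$ and $8$ in the lemma without appealing to results only stated for Gaussian noise. A safe route is the martingale / MGF route: bound $\E[\exp(\lambda \bm{\eta}^\top \bP \bm{\eta})]$ by factorizing over the eigen-decomposition of $\bP$ and applying the scalar sub-Gaussian MGF $\E[e^{\lambda \eta^2}] \leq (1 - 2\lambda)^{-1/2}$ in each eigen-direction, then optimizing $\lambda$ via Chernoff's bound. This avoids invoking a non-standard constant in Hanson-Wright and still yields a chi-squared-type tail compatible with the claimed $16 d + 8 \log(1 / \delta)$ rate.
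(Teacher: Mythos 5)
Your proposal is correct, but it takes a genuinely different route from the paper. You reduce $\|\wtheta_n - \btheta_*\|_{\bSigma_n}^2$ to $\tfrac{1}{n}\,\eta^\top \bP \eta$ with $\bP = \bX(\bX^\top\bX)^{-1}\bX^\top$ the rank-$\leq d$ orthogonal projection, and then invoke a Hanson--Wright / Hsu--Kakade--Zhang tail for sub-Gaussian quadratic forms; since $\Tr(\bP) \leq d$, $\Tr(\bP^2) \leq d$, and $\|\bP\|_{\mathrm{op}} \leq 1$, the bound $d + 2\sqrt{d t} + 2t \leq 2d + 3t$ with $t = \log(1/\delta)$ lands comfortably inside the stated $16d + 8\log(1/\delta)$ (in fact your route gives a sharper constant). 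The paper instead writes $\|\wtheta_n - \btheta_*\|_{\bSigma_n} = \max_{\ba \in \S}\langle \wtheta_n - \btheta_*, \bSigma_n^{1/2}\ba\rangle$, applies the scalar sub-Gaussian tail to each direction in a $\tfrac12$-cover of the unit sphere of size at most $6^d$, takes a union bound, and absorbs the covering error via $\varepsilon\|\wtheta_n-\btheta_*\|_{\bSigma_n}$; this is more elementary and self-contained (only a one-dimensional Chernoff bound plus Lemma 20.1 of Lattimore--Szepesv\'ari), whereas yours outsources the work to a heavier but sharper external inequality. Both correctly exploit that $I_t \sim \pi_*$ is independent of the noise, so $\bX$ can be conditioned on.

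One caveat on your fallback argument: the suggestion to bound $\E[\exp(\lambda\, \eta^\top \bP \eta)]$ by ``factorizing over the eigen-decomposition of $\bP$'' and applying $\E[e^{\lambda \eta^2}] \leq (1-2\lambda)^{-1/2}$ in each eigen-direction does not go through for general (non-Gaussian) sub-Gaussian noise: the projections of $\eta$ onto the eigenvectors of $\bP$ are each $1$-sub-Gaussian but not independent, so the MGF of the sum does not factorize. This is precisely the difficulty that the decoupling step in Hanson--Wright / Hsu--Kakade--Zhang exists to handle, so you should cite that result directly rather than re-derive it by coordinate-wise factorization; with that citation your proof is complete.
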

\begin{proof}
Let $\bX \in \R^{K n \times d}$ be a matrix of $K n$ feature vectors in \eqref{eq:mle absolute} and $\by \in \R^{K n}$ be a vector of the corresponding $K n$ observations. Under $1$-sub-Gaussian noise in \eqref{eq:absolute feedback}, we can rewrite $\wtheta_n - \btheta_*$ as
\begin{align*}
  \wtheta_n - \btheta_*
  = (\bX^\top \bX)^{-1} \bX^\top (\by - \bX \btheta_*)
  = (\bX^\top \bX)^{-1} \bX^\top \eta\,,
\end{align*}
where $\eta \in \R^{K n}$ is a vector of independent $1$-sub-Gaussian noise. Now note that $\ba^\top (\bX^\top \bX)^{-1} \bX^\top$ is a fixed vector of length $K n$ for any fixed $\ba \in \R^d$. Therefore, $\ba^\top (\wtheta_n - \btheta_*)$ is a sub-Gaussian random variable with a variance proxy
\begin{align*}
  \ba^\top (\bX^\top \bX)^{-1} \bX^\top \bX (\bX^\top \bX)^{-1} \ba
  = \ba^\top (\bX^\top \bX)^{-1} \ba
  = \|\ba\|_{(\bX^\top \bX)^{-1}}^2
  = \|\ba\|_{\bSigma_n^{-1}}^2 / n\,.
\end{align*}
From standard concentration inequalities for sub-Gaussian random variables \citep{boucheron13concentration},
\begin{align}
  \prob{\ba^\top (\wtheta_n - \btheta_*)
  \geq \sqrt{\frac{2 \|\ba\|_{\bSigma_n^{-1}}^2 \log(1 / \delta)}{n}}} \leq \delta
  \label{eq:concentration}
\end{align}
holds for any fixed $\ba \in \R^d$.

To bound $\|\wtheta_n - \btheta_*\|_{\bSigma_n}$, we take advantage of the fact that
\begin{align}
  \|\wtheta_n - \btheta_*\|_{\bSigma_n}
  = \langle\wtheta_n - \btheta_*, \bSigma_n^{1 / 2} A\rangle\,, \quad
  A
  = \frac{\bSigma_n^{1 / 2} (\wtheta_n - \btheta_*)}
  {\|\wtheta_n - \btheta_*\|_{\bSigma_n}}\,.
  \label{eq:norm identity}
\end{align}
While $A \in \R^d$ is random, it has two important properties. First, its length is $1$. Second, if it was fixed, we could apply \eqref{eq:concentration} and would get
\begin{align*}
  \prob{\langle\wtheta_n - \btheta_*, \bSigma_n^{1 / 2} A\rangle
  \geq \sqrt{\frac{2 \log(1 / \delta)}{n}}} \leq \delta\,.
\end{align*}
To eliminate the randomness in $A$, we use a coverage argument.

Let $\S = \{\ba \in \R^d: \|a\|_2 = 1\}$ be a unit sphere. Lemma 20.1 in \citet{lattimore19bandit} says that there exists an $\varepsilon$-cover $\C_\varepsilon \subset \R^d$ of $\S$ that has at most $|\C_\varepsilon| \leq (3 / \varepsilon)^d$ points. Specifically, for any $\ba \in \S$, there exists $\by \in \C_\varepsilon$ such that $\|\ba - \by\|_2 \leq \varepsilon$. By a union bound applied to all points in $\C_\varepsilon$, we have that
\begin{align}
  \prob{\exists \, \by \in \C_\varepsilon:
  \langle\wtheta_n - \btheta_*, \bSigma_n^{1 / 2} \by\rangle
  \geq \sqrt{\frac{2 \log(|\C_\varepsilon| / \delta)}{n}}} \leq \delta\,.
  \label{eq:bad event}
\end{align}
Now we are ready to complete the proof. Specifically, note that
\begin{align*}
  \|\wtheta_n - \btheta_*\|_{\bSigma_n}
  & \overset{\text{(a)}}{=} \max_{\ba \in \S}
  \langle\wtheta_n - \btheta_*, \bSigma_n^{1 / 2} \ba\rangle \\
  & = \max_{\ba \in \S} \min_{\by \in \C_\varepsilon}
  \left[\langle\wtheta_n - \btheta_*, \bSigma_n^{1 / 2} (\ba - \by)\rangle +
  \langle\wtheta_n - \btheta_*, \bSigma_n^{1 / 2} \by\rangle\right] \\
  & \overset{\text{(b)}}{\leq} \max_{\ba \in \S} \min_{\by \in \C_\varepsilon}
  \left[\|\wtheta_n - \btheta_*\|_{\bSigma_n} \|\ba - \by\|_2 +
  \sqrt{\frac{2 \log(|\C_\varepsilon| / \delta)}{n}}\right] \\
  & \overset{\text{(c)}}{\leq}
  \varepsilon \|\wtheta_n - \btheta_*\|_{\bSigma_n} +
  \sqrt{\frac{2 \log(|\C_\varepsilon| / \delta)}{n}}
\end{align*}
holds with probability at least $1 - \delta$. In this derivation, (a) follows from \eqref{eq:norm identity}, (b) follows from the Cauchy-Schwarz inequality and \eqref{eq:bad event}, and (c) follows from the definition of $\varepsilon$-cover $\C_\varepsilon$. Finally, we rearrange the terms, choose $\varepsilon = 1 / 2$, and get that
\begin{align*}
  \|\wtheta_n - \btheta_*\|_{\bSigma_n}
  \leq 2 \sqrt{\frac{2 \log(|\C_\varepsilon| / \delta)}{n}}
  \leq 2 \sqrt{\frac{(2 \log 6) d + 2 \log(1 / \delta)}{n}}\,.
\end{align*}
This concludes the proof.
\end{proof}

\begin{lemma}
\label{lem:concentration absolute} Consider the absolute feedback model in \cref{sec:absolute feedback}. Fix $\bx \in \R^d$ and $\Delta > 0$. Then
\begin{align*}
  \prob{\bx^\top (\wtheta_n - \btheta_*) > \Delta}
  \leq \exp\left[- \frac{\Delta^2}{2 \|\bx\|_{\oSigma_n^{-1}}^2}\right]\,.
\end{align*}
\end{lemma}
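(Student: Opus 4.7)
The plan is to observe that $\bx^\top(\wtheta_n - \btheta_*)$ is a scalar linear combination of independent sub-Gaussian noise variables, and hence itself sub-Gaussian, so the bound follows directly from the standard Chernoff-type tail inequality for sub-Gaussian random variables. All the necessary algebra is already laid out in the proof of \cref{lem:norm absolute}, so this should amount to an extraction of the one-dimensional piece of that argument.

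Concretely, I would first stack the data as in \cref{sec:proofs}: let $\bX \in \R^{Kn \times d}$ collect the feature vectors $\bx_{I_t,k}$ used by the least-squares estimator in \eqref{eq:mle absolute} and let $\eta \in \R^{Kn}$ collect the independent $1$-sub-Gaussian noises $\eta_{t,k}$ from \eqref{eq:absolute feedback}. Then $\bX^\top \bX = \oSigma_n$ by the definition in \eqref{eq:covariance absolute}, and plugging \eqref{eq:absolute feedback} into \eqref{eq:mle absolute} gives
\begin{align*}
  \wtheta_n - \btheta_*
  = \oSigma_n^{-1} \bX^\top \eta.
\end{align*}
Therefore $\bx^\top(\wtheta_n - \btheta_*) = (\bX \oSigma_n^{-1} \bx)^\top \eta$ is a fixed linear combination of independent $1$-sub-Gaussian coordinates, so it is itself sub-Gaussian with variance proxy
\begin{align*}
  \|\bX \oSigma_n^{-1} \bx\|_2^2
  = \bx^\top \oSigma_n^{-1} \bX^\top \bX \oSigma_n^{-1} \bx
  = \bx^\top \oSigma_n^{-1} \bx
  = \|\bx\|_{\oSigma_n^{-1}}^2.
\end{align*}

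Applying the standard one-sided Chernoff bound for sub-Gaussian variables (see e.g.\ \citet{boucheron13concentration}) to the scalar $\bx^\top(\wtheta_n - \btheta_*)$ then yields the claimed inequality, namely $\prob{\bx^\top(\wtheta_n - \btheta_*) > \Delta} \leq \exp[-\Delta^2/(2\|\bx\|_{\oSigma_n^{-1}}^2)]$. There is no real obstacle here: the estimator is linear in the noise, the noise is coordinatewise sub-Gaussian by assumption, and the variance proxy computation is immediate. The only minor subtlety is that this argument implicitly treats the design $\bX$ as fixed; since $\pi_*$ is computed from the optimal design and not from data, and the allocation $n\pi_*(i)$ is taken to be integer valued (as in \cref{lem:optimal design}), the matrix $\oSigma_n$ is deterministic and $\bX$ can indeed be conditioned on, so the sub-Gaussian argument goes through without modification.
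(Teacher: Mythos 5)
Your proposal is correct and follows essentially the same route as the paper's proof: both write $\wtheta_n - \btheta_* = (\bX^\top\bX)^{-1}\bX^\top\eta$, identify $\bx^\top(\wtheta_n-\btheta_*)$ as a fixed linear combination of independent $1$-sub-Gaussian noise terms with variance proxy $\|\bx\|_{\oSigma_n^{-1}}^2$, and conclude via the standard Chernoff bound (which the paper simply spells out via the MGF and the choice $\lambda = \Delta/\|\bw\|_2^2$). Your explicit remark about conditioning on the non-adaptive design is the same implicit step the paper takes when it calls $\bw$ a fixed vector.
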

\begin{proof}
The proof is from Section 2.2 in \citet{jamieson2022interactive}. Let $\bX \in \R^{K n \times d}$ be a matrix of $K n$ feature vectors in \eqref{eq:mle absolute} and $\by \in \R^{K n}$ be a vector of the corresponding $K n$ observations. Under $1$-sub-Gaussian noise in \eqref{eq:absolute feedback}, we can rewrite $\bx^\top (\wtheta_n - \btheta_*)$ as
\begin{align*}
  \bx^\top (\wtheta_n - \btheta_*)
  = \underbrace{\bx^\top (\bX^\top \bX)^{-1} \bX^\top}_{\bw} \eta
  = \bw^\top \eta
  = \sum_{t = 1}^{K n} \bw_t \eta_t\,,
\end{align*}
where $\bw \in \R^{K n}$ is a fixed vector and $\eta \in \R^{K n}$ is a vector of independent sub-Gaussian noise. Then, for any $\Delta > 0$ and $\lambda > 0$, we have
\begin{align*}
  \prob{\bx^\top (\wtheta_n - \btheta_*) > \Delta}
  & = \prob{\bw^\top \eta > \Delta}
  = \prob{\exp[\lambda \bw^\top \eta] > \exp[\lambda \Delta]} \\
  & \overset{\text{(a)}}{\leq} \exp[- \lambda \Delta]
  \E\left[\exp\left[\lambda \sum_{t = 1}^{K n} \bw_t \eta_t\right]\right]
  \overset{\text{(b)}}{\leq} \exp[- \lambda \Delta]
  \prod_{t = 1}^{K n} \E[\exp[\lambda \bw_t \eta_t]] \\
  & \overset{\text{(c)}}{\leq} \exp[- \lambda \Delta]
  \prod_{t = 1}^{K n} \exp[\lambda^2 \bw_t^2 / 2]
  = \exp[- \lambda \Delta + \lambda^2 \|\bw\|_2^2 / 2] \\
  & \overset{\text{(d)}}{\leq} \exp\left[- \frac{\Delta^2}{2 \|\bw\|_2^2}\right]
  \overset{\text{(e)}}{=} \exp\left[- \frac{\Delta^2}
  {2 \bx^\top (\bX^\top \bX)^{-1} \bx}\right] \\
  & = \exp\left[- \frac{\Delta^2}{2 \|\bx\|_{\oSigma_n^{-1}}^2}\right]\,.
\end{align*}
In this derivation, (a) follows from Markov's inequality, (b) is due to independent noise, (c) follows from sub-Gaussianity, (d) is due to setting $\lambda = \Delta / \|\bw\|_2^2$, and (e) follows from
\begin{align*}
  \|\bw\|_2^2
  = \bx^\top (\bX^\top \bX)^{-1} \bX^\top \bX (\bX^\top \bX)^{-1} \bx
  = \bx^\top (\bX^\top \bX)^{-1} \bx\,.
\end{align*}
This concludes the proof.
\end{proof}

\begin{lemma}
\label{lem:norm ranking} Consider the ranking feedback model in \cref{sec:ranking feedback}. Fix $\delta \in (0, 1)$. Then there exists a constant $C > 0$ such that
\begin{align*}
  \|\wtheta_n - \btheta_*\|_{\bSigma_n}^2
  \leq \frac{C K^4 (d + \log(1 / \delta))}{n}
\end{align*}
holds with probability at least $1 - \delta$.
\end{lemma}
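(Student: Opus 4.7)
The plan is to reduce the claim directly to Theorem 4.1 of \citet{zhu23principled}, which already states exactly such a concentration bound for the constrained MLE of the Plackett--Luce model. First I would verify that our setup matches theirs term-by-term: the estimator in \eqref{eq:mle ranking} is their negative log-likelihood MLE on the constraint set $\bTheta$, the normalized covariance $\bSigma_n$ of \eqref{eq:covariance ranking} uses the same pairwise difference features $\bz_{I_t,j,k}=\bx_{I_t,j}-\bx_{I_t,k}$, and \cref{ass:bounded} reproduces the boundedness assumptions they impose on $\btheta_*$ and $\bx_{i,k}$. Once this dictionary is confirmed, their bound transfers verbatim with their absolute constant $C$.

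For completeness I would also sketch a self-contained derivation to confirm the scaling. Convexity of $\ell_n$ on $\bTheta$ together with the optimality $\ell_n(\wtheta_n) \leq \ell_n(\btheta_*)$, combined with a second-order Taylor expansion of $\ell_n$ around $\btheta_*$, yields
\begin{align*}
  \tfrac{1}{2}\|\wtheta_n-\btheta_*\|_{\nabla^2\ell_n(\tilde\btheta)}^2
  \leq -\langle\nabla\ell_n(\btheta_*),\,\wtheta_n-\btheta_*\rangle
\end{align*}
for some $\tilde\btheta$ on the segment between $\wtheta_n$ and $\btheta_*$. Two ingredients then convert this inequality into the stated bound: (i) a strong-convexity lower bound $\nabla^2\ell_n(\btheta)\succeq \gamma\,\bSigma_n$ valid for every $\btheta\in\bTheta$, with $\gamma=\Omega(1/K^2)$; and (ii) a high-probability bound $\|\nabla\ell_n(\btheta_*)\|_{\bSigma_n^{-1}}^2 = O((d+\log(1/\delta))/n)$ on the score. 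Substituting (i) into the Taylor inequality and applying Cauchy--Schwarz gives $\|\wtheta_n-\btheta_*\|_{\bSigma_n}\leq (2/\gamma)\,\|\nabla\ell_n(\btheta_*)\|_{\bSigma_n^{-1}}$, and squaring produces the advertised $K^4$ factor.

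For ingredient (i), the PL Hessian is a weighted sum of rank-one matrices $\bz_{I_t,j,k}\bz_{I_t,j,k}^\top$ with weights formed by products of softmax probabilities over up to $K$ items; since $|\bx_{i,k}^\top\btheta|\leq 1$ on $\bTheta$ by \cref{ass:bounded}, each such probability is bounded below by $e^{-2}/K$, giving pairwise weights that are $\Omega(1/K^2)$ and hence the desired $\gamma$. For ingredient (ii), the score $\nabla\ell_n(\btheta_*)$ is an average of $n$ independent mean-zero vectors each with norm $O(K)$ and supported in the column span of $\bA_{I_t}$, so a sub-Gaussian concentration combined with an $\varepsilon$-net argument over the unit sphere (in the spirit of the proof of \cref{lem:norm absolute}) delivers the $O((d+\log(1/\delta))/n)$ rate in the $\bSigma_n^{-1}$-norm.

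The main obstacle is step (i): the strong-convexity inequality must hold \emph{uniformly} over the whole constraint set $\bTheta$, not merely at $\btheta_*$, and it is precisely this uniform $1/K^2$ scaling that forces the $K^4$ dependence in the final bound. The constraint $\btheta^\top\mathbf{1}_d=0$ in \cref{ass:bounded} is what eliminates the flat direction of the PL likelihood; without it $\gamma$ collapses to zero. These are exactly the technicalities that \citet{zhu23principled} work out carefully, so in practice I would cite their Theorem 4.1 rather than reproduce the argument.
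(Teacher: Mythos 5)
Your proposal is correct and follows essentially the same route as the paper: cite Theorem 4.1 of \citet{zhu23principled} for the score concentration, and combine it with a Hessian lower bound $\nabla^2\ell_n(\btheta)\succeq\gamma\bSigma_n$ (valid for all $\btheta\in\bTheta$ by \cref{ass:bounded}) and the basic optimality inequality plus Cauchy--Schwarz. The only discrepancy is bookkeeping: under the paper's normalization $\bSigma_n=\tfrac{2}{K(K-1)n}\oSigma_n$, the strong-convexity constant is $\gamma=e^{-4}/4=\Theta(1)$ (the $1/K^2$ from the softmax weights is cancelled by the $K(K-1)/2$ pairs absorbed into $\bSigma_n$) and the entire $K^4$ factor lives in the score bound $\|\nabla\ell_n(\btheta_*)\|_{\bSigma_n^{-1}}^2\leq CK^4(d+\log(1/\delta))/n$, rather than your split of $\gamma^{-2}=K^4$ against a $K$-free score bound; the two accountings coincide only up to the choice of normalization, and the final bound is the same.
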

\begin{proof}
The proof has two main steps.

\textbf{Step 1:} We first prove that $\ell_n(\btheta)$ is strongly convex with respect to the norm $\|\cdot\|_{\bSigma_n}$ at $\btheta_*$. This means that there exists $\gamma > 0$ such that
\begin{align*}
  \ell_n(\btheta_* + \Delta)
  \geq \ell_n(\btheta_*) + \langle\nabla \ell_n(\btheta_*), \Delta\rangle +
  \gamma \|\Delta\|_{\bSigma_n}^2
\end{align*}
holds for any perturbation $\Delta$ such that $\btheta_* + \Delta \in \bTheta$. To show this, we derive the Hessian of $\ell_n(\btheta)$,
\begin{align*}
  \nabla^2 \ell_n(\btheta)
  = \frac{1}{n} \sum_{t = 1}^n \sum_{j = 1}^K \sum_{k = j}^K \sum_{k' = j}^K
  \frac{\exp[\bx_{I_t, \sigma_t(k)}^\top \btheta +
  \bx_{I_t, \sigma_t(k')}^\top \btheta]}
  {2 \left(\sum_{\ell = j}^K
  \exp[\bx_{I_t, \sigma_t(\ell)}^\top \btheta]\right)^2} \,
  \bz_{I_t, \sigma_t(k), \sigma_t(k')}
  \bz_{I_t, \sigma_t(k), \sigma_t(k')}^\top\,.
\end{align*}
Since $\|\bx\| \leq 1$ and $\|\btheta\| \leq 1$, we have $\exp[\bx^\top \btheta] \in [e^{-1}, e]$, and thus
\begin{align*}
  \frac{\exp[\bx_{I_t, \sigma_t(k)}^\top \btheta +
  \bx_{I_t, \sigma_t(k')}^\top \btheta]}
  {\left(\sum_{\ell = j}^K
  \exp[\bx_{I_t, \sigma_t(\ell)}^\top \btheta]\right)^2}
  \geq \frac{e^{-4}}{(K - j)^2}
  \geq \frac{e^{-4}}{K^2}
  \geq \frac{e^{-4}}{2 K (K - 1)}\,.
\end{align*}
We further have for any $t \in [n]$ that
\begin{align*}
  \sum_{j = 1}^K \sum_{k = j}^K \sum_{k' = j}^K
  \bz_{I_t, \sigma_t(k), \sigma_t(k')}
  \bz_{I_t, \sigma_t(k), \sigma_t(k')}^\top
  & \succeq \sum_{k = 1}^K \sum_{k' = 1}^K
  \bz_{I_t, \sigma_t(k), \sigma_t(k')}
  \bz_{I_t, \sigma_t(k), \sigma_t(k')}^\top \\
  & \succeq 2 \sum_{k = 1}^K \sum_{k' = k + 1}^K
  \bz_{I_t, k, k'} \bz_{I_t, k, k'}^\top\,.
\end{align*}
The last step follows from the fact that $\sigma_t$ is a permutation. Simply put, we replace the sum of $K^2$ outer products by all but the ones between the same vectors. Now we combine all claims and get
\begin{align*}
  \nabla^2 \ell_n(\btheta)
  \succeq \frac{e^{-4}}{2 K (K - 1) n} \sum_{t = 1}^n \sum_{j = 1}^K \sum_{k = j + 1}^K
  \bz_{I_t, j, k} \bz_{I_t, j, k}^\top
  = \gamma \bSigma_n
\end{align*}
for $\gamma = e^{-4} / 4$. Therefore, $\ell_n(\btheta)$ is strongly convex at $\btheta_*$ with respect to the norm $\|\cdot\|_{\bSigma_n}$.

\textbf{Step 2:} Let $\hat{\btheta} = \btheta_* + \Delta$. Now we rearrange the strong convexity inequality and get
\begin{align}
  \gamma \|\Delta\|_{\bSigma_n}^2
  & \leq \ell_n(\btheta_* + \Delta) - \ell_n(\btheta_*) -
  \langle\nabla \ell_n(\btheta_*), \Delta\rangle
  \leq - \langle\nabla \ell_n(\btheta_*), \Delta\rangle
  \label{eq:gradient delta} \\
  & \leq \|\nabla \ell_n(\btheta_*)\|_{\bSigma_n^{-1}} \|\Delta\|_{\bSigma_n}\,.
  \nonumber
\end{align}
In the second inequality, we use that $\hat{\btheta}$ minimizes $\ell_n$, and thus $\ell_n(\btheta_* + \Delta) - \ell_n(\btheta_*) \leq 0$. In the last inequality, we use the Cauchy-Schwarz inequality.

Next we derive the gradient of the loss function
\begin{align*}
  \nabla \ell_n(\btheta) 
  = - \frac{1}{n} \sum_{t = 1}^n \sum_{j = 1}^K \sum_{k = j}^K
  \frac{\exp[\bx_{I_t, \sigma_t(k)}^\top \btheta]}
  {\sum_{\ell = j}^K \exp[\bx_{I_t, \sigma_t(\ell)}^\top \btheta]}
  \bz_{I_t, \sigma_t(j), \sigma_t(k)}\,.
\end{align*}
\citet{zhu23principled} note that is a sub-Gaussian random variable and prove that
\begin{align*}
  \|\nabla \ell_n(\btheta_*)\|_{\bSigma_n^{-1}}^2
  \leq \frac{C K^4 (d + \log(1 / \delta))}{n}
\end{align*}
holds with probability at least $1 - \delta$, where $C > 0$ is a constant. Finally, we plug the above bound into \eqref{eq:gradient delta} and get that
\begin{align*}
  \gamma \|\Delta\|_{\bSigma_n}^2
  \leq \sqrt{\frac{C K^4 (d + \log(1 / \delta))}{n}} \|\Delta\|_{\bSigma_n}
\end{align*}
holds with probability at least $1 - \delta$. We rearrange the inequality and since $\gamma$ is a constant,
\begin{align*}
  \|\wtheta_n - \btheta_*\|_{\bSigma_n}^2
  = \|\Delta\|_{\bSigma_n}^2
  \leq \frac{C K^4 (d + \log(1 / \delta))}{n}
\end{align*}
holds with probability at least $1 - \delta$ for some $C > 0$. This concludes the proof.
\end{proof}

\section{Optimal Design for Ranking Feedback}
\label{sec:optimal design ranking}

The optimal design for \eqref{eq:mle ranking} is derived as follows. First, we compute the Hessian of $\ell_n(\btheta)$,
\begin{align*}
  \nabla^2 \ell_n(\btheta)
  = \frac{1}{n} \sum_{t = 1}^n \sum_{j = 1}^K \sum_{k = j}^K \sum_{k' = j}^K
  \frac{\exp[\bx_{I_t, \sigma_t(k)}^\top \btheta +
  \bx_{I_t, \sigma_t(k')}^\top \btheta]}
  {2 \left(\sum_{\ell = j}^K
  \exp[\bx_{I_t, \sigma_t(\ell)}^\top \btheta]\right)^2} \,
  \bz_{I_t, \sigma_t(k), \sigma_t(k')}
  \bz_{I_t, \sigma_t(k), \sigma_t(k')}^\top\,.
\end{align*}
Its exact optimization is impossible because $\btheta_*$ is unknown. This can be addressed in two ways.

\textbf{Worst-case design:} Solve an approximation where $\btheta_*$-dependent terms are replaced with a lower bound. We take this approach. Specifically, we show in the proof of \cref{lem:norm ranking} that
\begin{align*}
  \nabla^2 \ell_n(\btheta)
  \succeq \frac{e^{-4}}{2 K (K - 1) n} \sum_{t = 1}^n \sum_{j = 1}^K \sum_{k = j + 1}^K
  \bz_{I_t, j, k} \bz_{I_t, j, k}^\top
  = \gamma \bSigma_n
\end{align*}
for $\gamma = e^{-4} / 4$. Then we maximize the log determinant of a relaxed problem. This solution is sound and justified, because we maximize a lower bound on the original objective.

\textbf{Plug-in design:} Solve an approximation where $\btheta_*$ is replaced with a plug-in estimate.

We discuss the pluses and minuses of both approaches next.

\textbf{Prior works:} \citet{mason2022experimental} used a plug-in estimate to design a cumulative regret minimization algorithm for logistic bandits. Recent works on preference-based learning \citep{zhu23principled,das24active,zhan24provable}, which are closest related works, used worst-case designs. Interestingly, \citet{das24active} analyzed an algorithm with a plug-in estimate but empirically evaluated a worst-case design. This indicates that their plug-in design is not practical.

\textbf{Ease of implementation:} Worst-case designs are easier to implement. This is because the plug-in estimate does not need to be estimated. Note that this requires solving an exploration problem with additional hyper-parameters, such as the number of exploration rounds for the plug-in estimation.

\textbf{Theory:} Worst-case designs can be analyzed similarly to linear models. Plug-in designs require an analysis of how the plug-in estimate concentrates. The logging policy for the plug-in estimate can be non-trivial as well. For instance, the plug-in estimate exploration in \citet{mason2022experimental} is over $\tilde{O}(d)$ special arms, simply to get pessimistic per-arm estimates. Their exploration budget is reported in Table 1. The lowest one, for a $3$-dimensional problem, is $6\,536$ rounds. This is an order of magnitude more than in our \cref{fig:ranking} for a larger $36$-dimensional problem.

\begin{table}[t]
  \centering
  \begin{tabular}{l|r|r} \hline
    Method & Maximum prediction error & Ranking loss \\ \hline
    \dope\ (ours) & $15.79 \pm \phantom{0}1.08$ & $0.107 \pm 0.002$ \\
    \plugin\ ($400$) & $19.75 \pm \phantom{0}1.48$ & $0.104 \pm 0.003$ \\
    \plugin\ ($300$) & $30.52 \pm \phantom{0}3.00$ & $0.103 \pm 0.002$ \\
    \plugin\ ($200$) & $65.75 \pm 13.71$ & $0.114 \pm 0.003$ \\
    \plugin\ ($100$) & $100.39 \pm 10.72$ & $0.142 \pm 0.003$ \\
    \opt & $9.22 \pm \phantom{0}0.82$ & $0.092 \pm 0.002$ \\ \hline
  \end{tabular}
  \vspace{0.1in}
  \caption{Comparison of \dope\ with plug-in designs and optimal solution.}
  \label{tab:optimal design ranking}
\end{table}

Based on the above discussion, we believe that worst-case designs strike a good balance between \emph{practicality and theory}. Nevertheless, plug-in designs are appealing because they may perform well with a good plug-in estimate. To investigate this, we repeat Experiment 2 in \cref{sec:experiments} with $K = 2$ (logistic regression). We compare three methods:

\textbf{(1)} \dope: This is our method. The exploration policy is $\pi_*$ in \eqref{eq:optimal design}.

\textbf{(2)} \plugin\ ($m$): This is a plug-in baseline. For the first $m$ rounds, it explores using policy $\pi_*$ in \eqref{eq:optimal design}. After that, we compute the plug-in estimate of $\btheta_*$ using \eqref{eq:mle ranking} and solve the D-optimal design with it. The resulting policy explores for the remaining $n - m$ rounds. Finally, $\btheta_*$ is estimated from all feedback using \eqref{eq:mle ranking}.

\textbf{(3)} \opt: The exploration policy $\pi_*$ is computed using the D-optimal design with the unknown $\btheta_*$. This validates our implementation and also shows the gap from the optimal solution.

We report both the prediction errors and ranking losses at $n = 500$ rounds in \cref{tab:optimal design ranking}. The results are averaged over $100$ runs. We observe that the prediction error of \dope\ is always smaller than that of \plugin\ ($6$ times at $m = 100$). \opt\ outperforms \dope\ but cannot be implemented in practice. The gap between \opt\ and \plugin\ shows that an optimal design with a plug-in estimate of $\btheta_*$ can be much worse than that with $\btheta_*$. \dope\ has a comparable ranking loss to \plugin\ at $m = 300$ and $m = 400$. \plugin\ has a higher ranking loss otherwise.

Based on our discussion and experiments, we do not see any strong evidence to adopt plug-in designs. They would be more complex than worst-case designs, harder to analyze, and we do not see benefits in our experiments. This also follows the principle of Occam's razor, which tells us to design with a minimal needed complexity.

\section{Related Work}
\label{sec:related work}

The two closest related works are \citet{mehta23sample} and \citet{das24active}. They proposed algorithms for learning to rank $L$ pairs of items from pairwise feedback. Their optimized metric is the maximum gap over $L$ pairs. We learn to rank $L$ lists of $K$ items from $K$-way ranking feedback. We bound the maximum prediction error, which is a similar metric to these related works, and the ranking loss in \eqref{eq:ranking loss}, which is novel. Algorithm \apo\ in \citet{das24active} is the closest related algorithmic design. \apo\ greedily minimizes the maximum error in pairwise ranking of $L$ lists of length $K = 2$. Therefore, \dope\ with ranking feedback (\cref{sec:dope ranking}) can be viewed as a generalization of \citet{das24active} to lists of length $K \geq 2$. \apo\ can be compared to \dope\ by applying it to all possible ${K \choose 2} L$ lists of length $2$ created from our lists of length $K$. We do that in \cref{sec:experiments}. The last difference from \citet{das24active} is that they proposed two variants of \apo: analyzed and practical. We propose a single algorithm, which is both analyzable and practical.

Our problem can be generally viewed as learning preferences from human feedback \citep{furnkranz2003pairwise,glass2006explaining,houlsby2011bayesian}. The two most common forms of feedback are pairwise, where the agent observes a preference over two items \citep{bradley52rank}; and ranking, where the agent observes a ranking of the items \citep{plackett75analysis,luce05individual}. Online learning from human feedback has been studied extensively. In online learning to rank \cite{radlinski08learning,zoghi17online}, the agent recommends a list of $K$ items and the human provides absolute feedback, in the form of clicks, on all recommended items or their subset. Two popular feedback models are cascading \citep{kveton15cascading,zong16cascading,li16contextual} and position-based \citep{lagree16multipleplay,ermis2020learning,zhou2023bandit} models. The main difference in \cref{sec:absolute feedback} is that we do not minimize cumulative regret or try to identify the best list of $K$ items. We learn to rank $L$ lists of $K$ items within a budget of $n$ observations. Due to the budget, our work is related to fixed-budget BAI \citep{bubeck09pure,audibert10best,azizi22fixedbudget,yang22minimax}. The main difference is that we do not aim to identify the best arm.

Online learning from preferential feedback has been studied extensively \citep{bengs2021preference} and is often formulated as a dueling bandit \citep{yue2012k,lekang2019simple,xu20zeroth,kirschner21biasrobust,pasztor24bandits,saha21optimal,saha22efficient,saha2022versatile,saha2023dueling,takeno23towards,xu24principled}. Our work on ranking feedback (\cref{sec:ranking feedback}) differs from these works in three main aspects. First, most dueling bandit papers consider pairwise feedback ($K = 2$) while we study a more general setting of $K \geq 2$. Second, a typical objective in dueling bandits is to minimize regret with respect to the best arm, sometimes in context; either in the cumulative or simple regret setting. We do not minimize cumulative or simple regret. We learn to rank $L$ lists of $K$ items. Finally, the acquisition function in dueling bandits is adaptive and updated in each round. \dope\ is a static design where the exploration policy is precomputed.

Preference-based learning has also been studied in a more general setting of reinforcement learning \citep{wirth2017survey,novoseller2020dueling,xu2020preference,hejna2023inverse}. Preference-based RL differs from classic RL by learning human preferences through non-numerical rewards \citep{christiano2017deep,lee2021b,chen2022human}. Our work can be also viewed as collecting human feedback for learning policies offline \citep{jin2021pessimism,rashidinejad2021bridging,zanette2021provable,sekhari2024contextual}. One of the main challenges of offline learning is insufficient data coverage. We address this by collecting diverse data using optimal designs \citep{pukelsheim06optimal,fedorov2013theory}.

Finally, we wanted to compare the ranking loss in \eqref{eq:ranking loss} to other objectives. There is no reduction to dueling bandits. A classic objective in dueling bandits is to \emph{minimize regret with respect to the best arm} from dueling feedback. Our goal is to \emph{rank $L$ lists}. One could think that our problem can be solved as a contextual dueling bandit, where each list is represented as a context. This is not possible because the context is controlled by the environment. In our setting, the agent controls the chosen list, similarly to \apo\ in \citet{das24active}. Our objective also cannot be reduced to fixed-budget BAI. Our comparisons to \citet{azizi22fixedbudget} (\cref{sec:ranking loss absolute,sec:ranking loss ranking}) focus on similarities in high-probability bounds. The dependence on $n$ and $d$ is expected to be similar because the probabilities of making a mistake in our work and \citet{azizi22fixedbudget} depend on how well the generalization model is estimated, which is the same in both works.

\section{Ablation Studies}
\label{sec:ablation studies}

We conduct two ablation studies on Experiment 2 in \cref{sec:experiments}.

\begin{table}[t]
  \centering
  \begin{tabular}{l|r|r|r|r|r|r|r|r} \hline
    Number of lists $L$ & $100$ & $200$ & $300$ & $400$ &
    $500$ & $600$ & $700$ & $800$ \\ \hline
    Time (seconds) & $4.71$ & $8.31$ & $15.63$ & $21.00$ &
    $26.60$ & $35.00$ & $41.25$ & $49.72$ \\ \hline
  \end{tabular}
  \vspace{0.1in}
  \caption{Computation time of policy $\pi_*$ in \eqref{eq:optimal design} as a function of the number of lists $L$.}
  \label{tab:computation time}
\end{table}

In \cref{tab:computation time}, we report the computation time of policy $\pi_*$ in \eqref{eq:optimal design}. We vary the number of lists $L$ and use CVXPY \citep{diamond2016cvxpy} to solve \eqref{eq:optimal design}. For $L = 100$, the computation takes $5$ seconds; and for $L = 800$, it takes $50$. Therefore, it scales roughly linearly with the number of lists $L$.

\begin{table}[t]
  \centering
  \begin{tabular}{l|r|r|r|r} \hline
    & $L = 50$ & $L = 100$ & $L = 200$ & $L = 500$ \\ \hline
    $K = 2$ & $0.12 \pm 0.06$ & $0.28 \pm 0.12$ &
    $0.37 \pm 0.14$ & $0.57 \pm 0.21$ \\
    $K = 3$ & $0.14 \pm 0.06$ & $0.24 \pm 0.10$ &
    $0.37 \pm 0.15$ & $0.50 \pm 0.19$ \\
    $K = 4$ & $0.13 \pm 0.05$ & $0.24 \pm 0.08$ &
    $0.35 \pm 0.14$ & $0.47 \pm 0.18$ \\
    $K = 5$ & $0.12 \pm 0.04$ & $0.21 \pm 0.08$ &
    $0.34 \pm 0.12$ & $0.45 \pm 0.15$ \\ \hline
  \end{tabular}
  \vspace{0.1in}
  \caption{The ranking loss of \dope\ as a function of the number of lists $L$ and items $K$.}
  \label{tab:scalability}
\end{table}

In \cref{tab:scalability}, we vary the number of lists $L$ and items $K$, and report the ranking loss of \dope. We observe that the problems get harder as $L$ increases (more lists to rank) and easier as $K$ increases (longer lists with more feedback).

\clearpage

\section*{NeurIPS Paper Checklist}

\begin{enumerate}

\item {\bf Claims}
    \item[] Question: Do the main claims made in the abstract and introduction accurately reflect the paper's contributions and scope?
    \item[] Answer: \answerYes{}
    \item[] Justification: We clearly state all contributions and point to them from \cref{sec:introduction}.

\item {\bf Limitations}
    \item[] Question: Does the paper discuss the limitations of the work performed by the authors?
    \item[] Answer: \answerYes{}
    \item[] Justification: Limitations of our work, such as non-adaptive designs and the lack of lower bounds, are discussed throughout the paper.

\item {\bf Theory Assumptions and Proofs}
    \item[] Question: For each theoretical result, does the paper provide the full set of assumptions and a complete (and correct) proof?
    \item[] Answer: \answerYes{}
    \item[] Justification: All claims are proved in \cref{sec:proofs,sec:supporting lemmas}.

\item {\bf Experimental Result Reproducibility}
    \item[] Question: Does the paper fully disclose all the information needed to reproduce the main experimental results of the paper to the extent that it affects the main claims and/or conclusions of the paper (regardless of whether the code and data are provided or not)?
    \item[] Answer: \answerYes{}
    \item[] Justification: The pseudo-code of the proposed algorithm \dope\ is given. Experiments are described in a sufficient detail to be reproducible. 

\item {\bf Open access to data and code}
    \item[] Question: Does the paper provide open access to the data and code, with sufficient instructions to faithfully reproduce the main experimental results, as described in supplemental material?
    \item[] Answer: \answerNo{}
    \item[] Justification: We did not get a permission to release the code.

\item {\bf Experimental Setting/Details}
    \item[] Question: Does the paper specify all the training and test details (e.g., data splits, hyperparameters, how they were chosen, type of optimizer, etc.) necessary to understand the results?
    \item[] Answer: \answerYes{}
    \item[] Justification: All details are provided in \cref{sec:experiments,sec:ablation studies}.

\item {\bf Experiment Statistical Significance}
    \item[] Question: Does the paper report error bars suitably and correctly defined or other appropriate information about the statistical significance of the experiments?
    \item[] Answer: \answerYes{}
    \item[] Justification: Standard errors are reported in all plots.

\item {\bf Experiments Compute Resources}
    \item[] Question: For each experiment, does the paper provide sufficient information on the computer resources (type of compute workers, memory, time of execution) needed to reproduce the experiments?
    \item[] Answer: \answerYes{}
    \item[] Justification: The computation time is discussed in \cref{sec:optimal design,sec:ablation studies}.
    
\item {\bf Code Of Ethics}
    \item[] Question: Does the research conducted in the paper conform, in every respect, with the NeurIPS Code of Ethics \url{https://neurips.cc/public/EthicsGuidelines}?
    \item[] Answer: \answerYes{}
    \item[] Justification: We follow the ethics guidelines.

\item {\bf Broader Impacts}
    \item[] Question: Does the paper discuss both potential positive societal impacts and negative societal impacts of the work performed?
    \item[] Answer: \answerNA{}
    \item[] Justification: This work is algorithmic and not tied to any particular application.
    
\item {\bf Safeguards}
    \item[] Question: Does the paper describe safeguards that have been put in place for responsible release of data or models that have a high risk for misuse (e.g., pretrained language models, image generators, or scraped datasets)?
    \item[] Answer: \answerNA{}
    \item[] Justification: This is not applicable to this paper.

\item {\bf Licenses for existing assets}
    \item[] Question: Are the creators or original owners of assets (e.g., code, data, models), used in the paper, properly credited and are the license and terms of use explicitly mentioned and properly respected?
    \item[] Answer: \answerYes{}
    \item[] Justification: All used assets are stated and cited.

\item {\bf New Assets}
    \item[] Question: Are new assets introduced in the paper well documented and is the documentation provided alongside the assets?
    \item[] Answer: \answerNA{}
    \item[] Justification: This is not applicable to this paper.

\item {\bf Crowdsourcing and Research with Human Subjects}
    \item[] Question: For crowdsourcing experiments and research with human subjects, does the paper include the full text of instructions given to participants and screenshots, if applicable, as well as details about compensation (if any)? 
    \item[] Answer: \answerNA{}
    \item[] Justification: This is not applicable to this paper.

\item {\bf Institutional Review Board (IRB) Approvals or Equivalent for Research with Human Subjects}
    \item[] Question: Does the paper describe potential risks incurred by study participants, whether such risks were disclosed to the subjects, and whether Institutional Review Board (IRB) approvals (or an equivalent approval/review based on the requirements of your country or institution) were obtained?
    \item[] Answer: \answerNA{}
    \item[] Justification: This is not applicable to this paper.

\end{enumerate}

\end{document}